
\documentclass[10pt,twocolumn,letterpaper]{article}

\usepackage[pagenumbers]{style} 

\usepackage{graphicx}
\usepackage{amsmath}
\usepackage{amsthm}
\usepackage{amssymb}
\usepackage{epstopdf}
\usepackage{booktabs}
\usepackage{enumerate}
\usepackage[ruled]{algorithm2e}
\usepackage{appendix}
\usepackage{etoolbox}
\usepackage{xr-hyper}


%
\usepackage[pagebackref,breaklinks,colorlinks]{hyperref}

\makeatletter
\def\ps@myheadings{%
    \let\@oddfoot\@empty\let\@evenfoot\@empty
    \def\@evenhead{\thepage\hfil\slshape\leftmark}%
    \def\@oddhead{{\slshape\rightmark}\hfil\thepage}%
    \let\@mkboth\@gobbletwo
    \let\sectionmark\@gobble
    \let\subsectionmark\@gobble
    }
  \if@titlepage
  \renewcommand\maketitle{\begin{titlepage}%
  \let\footnotesize\small
  \let\footnoterule\relax
  \let \footnote \thanks
  \null\vfil
  \vskip 60\p@
  \begin{center}%
    {\LARGE \@title \par}%
    \vskip 3em%
    {\large
     \lineskip .75em%
      \begin{tabular}[t]{c}%
        \@author
      \end{tabular}\par}%
      \vskip 1.5em%
    {\large \@date \par}
  \end{center}\par
  \@thanks
  \vfil\null
  \end{titlepage}%
  \setcounter{footnote}{0}%
}
\else
\renewcommand\maketitle{\par
  \begingroup
    \renewcommand\thefootnote{\@fnsymbol\c@footnote}%
    \def\@makefnmark{\rlap{\@textsuperscript{\normalfont\@thefnmark}}}%
    \long\def\@makefntext##1{\parindent 1em\noindent
            \hb@xt@1.8em{%
                \hss\@textsuperscript{\normalfont\@thefnmark}}##1}%
    \if@twocolumn
      \ifnum \col@number=\@ne
        \@maketitle
      \else
        \twocolumn[\@maketitle]%
      \fi
    \else
      \newpage
      \global\@topnum\z@   
      \@maketitle
    \fi
    \thispagestyle{plain}\@thanks
  \endgroup
  \setcounter{footnote}{0}%
}
\makeatother

\usepackage[capitalize]{cleveref}
\crefname{section}{Sec.}{Secs.}
\Crefname{section}{Section}{Sections}
\Crefname{table}{Table}{Tables}
\crefname{table}{Tab.}{Tabs.}


\newcommand{\RR}{\mathbb{R}}
\newcommand{\onelip}{1\text{-Lip}}
\newcommand{\spt}{\text{spt}}

\newtheorem{theorem}[]{Theorem}

\newtheorem{definition}[]{Definition}
\newtheorem{proposition}[]{Proposition}
\newtheorem{lemma}[]{Lemma}

\begin{document}

\title{Trust the Critics: Generatorless and Multipurpose WGANs with Initial Convergence Guarantees}

\author{Tristan Milne\textsuperscript{1}\!, {\'E}tienne Bilocq\textsuperscript{1}\!, Adrian Nachman\textsuperscript{1,2}\\
University of Toronto\\
Dept. of Mathematics\textsuperscript{1}\!, The Edward S. Rogers Sr. Dept. of Electrical and Computer Engineering\textsuperscript{2} \\
{\tt\small tmilne \{ebilocq, nachman\}@math.toronto.edu}
}
\maketitle

\begin{abstract}
Inspired by ideas from optimal transport theory we present Trust the Critics (TTC), a new algorithm for generative modelling. This algorithm eliminates the trainable generator from a Wasserstein GAN; instead, it iteratively modifies the source data using gradient descent on a sequence of trained critic networks. This is motivated in part by the misalignment which we observed between the optimal transport directions provided by the gradients of the critic and the directions in which data points actually move when parametrized by a trainable generator. Previous work has arrived at similar ideas from different viewpoints, but our basis in optimal transport theory motivates the choice of an adaptive step size which greatly accelerates convergence compared to a constant step size. Using this step size rule, we prove an initial geometric convergence rate in the case of source distributions with densities. These convergence rates cease to apply only when a non-negligible set of generated data is essentially indistinguishable from real data. Resolving the misalignment issue improves performance, which we demonstrate in experiments that show that given a fixed number of training epochs, TTC produces higher quality images than a comparable WGAN, albeit at increased memory requirements. In addition, TTC provides an iterative formula for the transformed density, which traditional WGANs do not. Finally, TTC can be applied to map any source distribution onto any target; we demonstrate through experiments that TTC can obtain competitive performance in image generation, translation, and denoising without dedicated algorithms.\footnote{Code available at  \url{https://github.com/tmilne5/Trust-the-Critics}.}
\end{abstract}

\section{Introduction}
Wasserstein GANs (WGANs) were introduced in \cite{arjovsky2017wasserstein} as a technique for training generative models using the Wasserstein-1 (or Earth Mover's) distance as a measure of dissimilarity between the real and generated distributions. WGANs consist of two neural networks, a generator and a critic. The generator $G_w$ (with $w$ as its parameters) is applied to a latent probability distribution $\zeta$ to produce the pushforward measure $\mu := (G_w)_\# \zeta$. Given data sampled from a probability distribution $\nu$, the critic $u$ is used to measure the discrepancy between $\mu$ and $\nu$ by computing the Wasserstein-1 distance, given by
\begin{equation}
W_1(\mu, \nu) = \sup_{u} \mathbb{E}_{x \sim \mu}[u(x)] - \mathbb{E}_{y \sim \nu}[u(y)],\label{prob:firstdual}
\end{equation}
where the supremum is computed over all functions $u$ which have a Lipschitz constant of $1$. It was proposed in \cite{gulrajani2017improved} to approximately solve this problem using a gradient penalty on $u$. Given a solution $u_0$ to \eqref{prob:firstdual} (called a Kantorovich potential), the generator is then trained by applying gradient descent to $W_1((G_w)_\# \zeta, \nu)$ over $w$. WGANs have been very successful, becoming one of the standard benchmarks for high quality image synthesis \cite{lucic2017gans, kurach2019large, karras2019style}.

Existing formulations of WGANs do not, however, make full use of the information contained in a Kantorovich potential. Indeed, it follows from $L^1$ optimal transport theory that $-\nabla u_0(G_w(z))$, where it exists, provides the exact direction, if not the distance, in which the generated image $G_w(z)$ should move so as to most rapidly reduce $W_1((G_w)_\# \zeta ,\nu)$. One may ask, then, the extent to which performing gradient descent on $W_1((G_w)_\# \zeta ,\nu)$ over the parameters $w$ actually obtains this optimal direction at each point $G_w(z)$. In Section \ref{sec:misalignment} of this paper we demonstrate experimentally that standard generator architectures fail to obtain this optimal direction.

We were thus motivated to resolve this issue and obtain the ideal direction of flow everywhere by simply modifying the initial distribution by a gradient descent step on $u_0$ with step size $\eta$; that is, to obtain a new distribution, $\tilde{\mu}$, from the formula
\begin{equation}
\tilde{\mu} := (I-\eta\nabla u_0)_\# \mu.\label{eq:singlestep}
\end{equation}
With $\tilde{\mu}$ now in hand, we next train a new critic solving \eqref{prob:firstdual}, with $\tilde{\mu}$ replacing $\mu$, and repeat the process. Given the connection between the gradients of a Kantorovich potential and the optimal transport directions, this process should push $\mu$ towards $\nu$. 

The selection of the step size $\eta$ is non-trivial, however. If $\eta$ is too small the algorithm will require many iterations to converge; since a new critic must be trained after each step, this makes the technique computationally expensive for any distributions $\mu$ and $\nu$ which are not close initially. If $\eta$ is too large, then recalling that $-\nabla u_0(x)$ provides the direction but not the distance of the optimal transport, the gradient step $x \mapsto x-\eta \nabla u_0(x)$ may overshoot the desired target. Ideally, the step size $\eta$ should be derived from the distance that points should move under the optimal transport. In fact, the average of this quantity is equal to $W_1(\mu,\nu)$. Fortuitously, the value of $W_1(\mu,\nu)$ is available to us at no extra computational cost as a by-product of computing $u_0$ via \eqref{prob:firstdual}. 

Hence, in TTC, we select the step size $\eta$ as some fraction of the current estimate of $W_1(\mu,\nu)$. This allows for large step sizes, but avoids undesirable overshooting of the targets. In practice, this enables efficient transformation of $\mu$ to $\nu$ even when they are initially far apart.

In addition, since we have freed $\mu$ from being the output of a generator, TTC can be applied to map any source distribution $\mu$ onto any target $\nu$. Thus,  various tasks in generative modelling, such as image synthesis, translation, and denoising, can be completed using TTC, the only requirement being changing the source and target data. This stands in contrast to the typical approach, where specialized generator architectures or algorithms are designed for each task.

The contributions of this paper are as follows:
\begin{enumerate}
\item We introduce TTC, a novel technique for generative modelling distinguished from existing algorithms by its adaptive step size selection method.
\item We demonstrate quantitatively that standard WGAN architectures fail to move generated data points in optimal directions over the course of training. This misalignment is resolved by TTC.  
\item We provide a formula for the density of the updated measure $\tilde{\mu}$, assuming $\mu$ has a density, and calculate the exact value of $W_1(\tilde{\mu},\nu)$ in terms of the step size $\eta$ under the assumption that the minimal transport distance is larger than this step size.
\item We prove a geometric convergence rate of a sequence of measures generated using TTC. Such a result is applicable until the minimal transport distance is zero, which occurs only when a non-negligible set of synthetic data is essentially indistinguishable from real data.
\item We demonstrate the effectiveness of TTC in practice by showing it generates more realistic looking images than comparable WGANs, as determined by the Fr{\'e}chet Inception Distance (FID) \cite{heusel2017gans}.
\item Finally, we demonstrate the multipurpose applicability of TTC by showing that, in addition to image generation, with just a simple change to the source and target distributions it can be successfully used for image translation as well as denoising tasks.
\end{enumerate}
The remainder of this paper is structured as follows. Related work is reviewed in Section \ref{sec:related}. Section \ref{sec:misalignment} describes experiments which show a large degree of misalignment between the optimal transport directions at generated data points and directions of movement actually obtained when training a generator. In Section \ref{sec:backgroundonOT}, we provide some background on $L^1$ optimal transport theory which is necessary to describe our algorithm. The TTC algorithm is defined in Section \ref{sec:algodef}, and we prove convergence results in Section \ref{sec:theoalgoandconvergence}; detailed proofs of all mathematical results are included in Appendix \ref{app:proofs}. Section \ref{sec:ttcvswgangp} describes an experiment which shows the improved performance of TTC against comparable WGANs, and Section \ref{sec:translation} shows the performance of TTC on the task of translating photographs into paintings in the style of Monet. Section \ref{sec:denoising} points out a connection between TTC and the adversarial regularization technique of \cite{lunz2018adversarial}, and demonstrates that TTC has improved performance versus this technique when applied to an image denoising task. We conclude in Section \ref{sec:discussion and Conclusion} with a discussion of TTC's limitations and potential societal impact.

\section{Related work}
\label{sec:related}
The idea to perform generation without an explicitly parametrized generator, but instead using the gradients of trained critics or discriminators is not new. It was applied in \cite{johnson2018composite} and \cite{lazarow2017introspective} to discriminators trained with the standard GAN objective \cite{goodfellow2014generative}, and in \cite{lee2018wasserstein} and  \cite{nitanda2018gradient} to critics trained with the gradient penalty method from WGANs \cite{gulrajani2017improved}. Since we also train our critics using the gradient penalty method, we will compare our work to \cite{lee2018wasserstein} and \cite{nitanda2018gradient}. In \cite{lee2018wasserstein} generation of new samples is performed by applying Adam \cite{kingma2014adam} with added noise to learned critics. Iterations are halted based on a heuristic stopping condition depending on a random number selected between the minimum and maximum value of the critic on the target samples. This method is not directly connected to the optimal transport problem and consequently has no convergence guarantees. In contrast, our technique applies a simple gradient descent step which is motivated by the concept of transport rays from $L^1$ optimal transport theory; this connection allows us to provide exact initial convergence rates for TTC. Moreover, the size of our step is proportional to the current estimate of the Wasserstein-1 distance, whereas in \cite{lee2018wasserstein} it is a small constant; at the start of training our step size can be quite large, which leads to fast convergence at no cost to the quality of generated samples. We also show superior performance of TTC as compared to WGANs for a fixed generator and critic architecture on three standard datasets, which was not achieved in \cite{lee2018wasserstein}. On the other hand, \cite{lee2018wasserstein} is able to obtain reasonably good performance using only four critic networks, which is more efficient than our technique.

The work in \cite{nitanda2018gradient} contains novel qualitative insights regarding functional gradient descent, and suggests that explicitly parametrized generators in standard WGANs may be unable to correctly follow the gradients of the critics. We had similar insights, and our experimental results in Section \ref{sec:misalignment} provide quantitative evidence for these claims by showing that the actual directions in which generated data move during training are far from the ideal ones. The authors of \cite{nitanda2018gradient} also use a constant step size for the gradient descent, and do not prove that their technique reduces the Wasserstein-1 distance. The  application of their algorithm is to fine-tune pre-trained WGAN generators, possibly because their choice of constant step size leads to long computation times when the source and target are not already close. While the fine-tuning of pre-trained generators is a valuable idea, our adaptive step size is large when the Wasserstein-1 distance between source and target distributions is large, and so we do not have these issues. 

Consequently, our generation scheme is applicable to any source distribution (\ie not necessarily one produced by a pre-trained generator). This allows us to apply our technique not only to image generation, but to image translation and denoising as well; similar flexibility was demonstrated for a different technique in \cite{lazarow2017introspective}. Because of this flexibility, we speculate that our technique is applicable to more than just imaging problems. 

In Section \ref{sec:denoising} we point out a connection between our work and an adversarial regularization technique for inverse problems in \cite{lunz2018adversarial}. In particular, we show that under certain conditions the implicit reconstruction technique used in \cite{lunz2018adversarial} is equivalent to a single step of TTC. While our paper was in development we learned of the recent paper \cite{mukherjee2021end}, which contains, in a similar fashion to our work, iterations of the implicit technique in \cite{lunz2018adversarial}. Our work complements that of \cite{mukherjee2021end}, since we study distinct generative modelling problems and provide an explicit reconstruction technique (as opposed to an implicit one specified by a variational problem) coupled with a proof of initial convergence obtained from $L^1$ optimal transport. Furthermore, \cite{mukherjee2021end} is devoted to a more general class of inverse problems than our denoising example, and we see an extension of our theory to their setting to be an interesting avenue for future work. 

There has been a profusion of recent papers applying concepts from optimal transport to generative modelling \cite{dai2020sliced, finlay2020train, gao2021deep, gao2019deep, mroueh2019sobolev, onken2020ot, sanjabi2018convergence}, some of which have convergence results. These publications tend to focus on optimal transport settings distinct from ours which are typically more amenable to analysis, such as the case where the cost function is the square Euclidean distance \cite{finlay2020train, gao2021deep, onken2020ot}, or where the cost function is regularized by entropy \cite{sanjabi2018convergence}. To our knowledge, our initial convergence results are unique in the $L^1$ optimal transport setting relevant to WGANs.

Finally, let us point out that the quality of our generated images, while better than comparable WGANs, is not comparable to the state of the art (\eg \cite{karras2020training}). However, the intent of the present paper is to focus on fundamental issues arising in adversarial approaches to generative modelling. Given the observed improved performance versus comparable WGANs, it will be interesting to apply TTC with state of the art discriminator architectures, but this is outside the scope of the current paper.

\section{Misalignment of data movement}
\label{sec:misalignment}
Given a Kantorovich potential $u_0$ for a generated distribution $\mu = (G_w)_\#\zeta$ and a target distribution $\nu$, the optimal direction of movement for a generated sample $G_w(z)$ is provided by the gradient $-\nabla u_0(G_w(z))$. When training a WGAN generator, however, the movement of generated samples is ruled by the gradient descent algorithm used to minimize $W_1((G_w)_\# \zeta, \nu)$ in terms of $w$. A standard gradient descent step will update $w$ in the direction of
\begin{align}
-\nabla_w W_1((&G_{w})_\# \zeta, \nu) = -\nabla_w \mathbb{E}_{(G_{w})_\# \zeta}[u_0], \\
&=-\mathbb{E}_{\tilde{z} \sim \zeta}\left[D_w G_w^T(\tilde{z}) \nabla u_0(G_w(\tilde{z}))\right].
\end{align}
When the expectation is calculated by averaging over a random mini-batch $\{z_i\}_{i=1}^M$ (\ie when using stochastic gradient descent), this leads to movement of a generated sample $G_w(z)$ parallel to
\begin{equation}
-D_wG_w(z) \left(\frac{1}{M} \sum_{i=1}^MD_w G_w^T(z_i) \nabla u_0(G_w(z_i))\right). \label{eq:actualupdatedirection}
\end{equation}
In this section we present our results on the misalignment between this expression and $-\nabla u_0(G_w(z))$.

We computed the cosines of the angles between vectors of the form \eqref{eq:actualupdatedirection} \ and $-\nabla u_0 (G_w(z))$ at randomly generated samples $G_w(z)$ and at various stages throughout the training of a WGAN generator, using mini-batches $\{z_i\}_{i=1}^M$ drawn separately from $z$. The critic network used to approximate $u_0$ was trained using the one sided gradient penalty from \cite{gulrajani2017improved}. The experiment was run on the MNIST dataset using both the InfoGAN architecture (\hspace{1sp}\cite{chen2016infogan}, which we selected since it was also used in the large-scale study \cite{lucic2017gans}) and the DCGAN architecture \cite{Radford2016UnsupervisedRL} for comparison. We used the batch size $M=128$. The results are summarized in Table \ref{table:ndp1}, and histograms of the cosine values obtained are provided in Appendix \ref{sec:misalignment_appendix}. The results clearly demonstrate that the movement of generated samples during WGAN training is not well aligned with the optimal directions provided by the Kantorovich potential. This discrepancy tends to increase as training goes on, to the point where negative cosine values (denoting undesirable movement in a direction along which $u_0$ increases) are obtained for a non-negligible portion of the generated samples, which may help explain why a generator can stop improving before the generated distribution is sufficiently close to the target. Table \ref{table:ndp1} also presents the performance of the generators as indicated by the FID. The InfoGAN generator obtained higher cosine values and lower FIDs than the DCGAN generator, suggesting that better alignment throughout training correlates with better performance. In Appendix \ref{sec:misalignment_appendix}, we also include a similar experiment describing the misalignment obtained when replacing \eqref{eq:actualupdatedirection} by updates computed with Adam.

\begin{table}
\centering
\begin{tabular}{ccc}
\toprule
Stage in training & InfoGAN & DCGAN  \\
\midrule
Early & $0.488\pm0.071$ & $0.441\pm0.075$ \\
 & \small{($410.4\pm9.0$)} & \small{($414.3\pm26.8$)} \\
\midrule
Mid & $0.150\pm0.080$  & $0.086\pm0.081$\\
 & \small{($22.9\pm 0.5$)} & \small{($40.8\pm7.1$)} \\
\midrule
Late & $0.100\pm0.070$ & $0.085\pm0.081$ \\
 & \small{($21.8\pm 0.4$)} & \small{($31.1\pm1.8$)} \\
\bottomrule
\end{tabular}
\caption{Misalignment cosine measurements. All results are reported in the form mean $\pm$ standard deviation. The top lines show the statistics of the cosines of the angles between \eqref{eq:actualupdatedirection} and $-\nabla u_0(G_w(z))$, computed over the training of five separate initializations of each model, each with 256 randomly sampled noise inputs $z$. The statistics appearing in parentheses are for FID values computed over the five training runs. The values were computed at three different stages in training, with the late stage chosen such that the generators were close to the best performance obtained, but still improving slightly.}
\label{table:ndp1}
\end{table}

\section{The TTC algorithm}
\subsection{Background on $L^1$ optimal transport theory}
\label{sec:backgroundonOT}
Here we will summarize the necessary background from $L^1$ optimal transport; for more details, see, for instance,  \cite{santambrogio2015optimal}. Let $\Omega \subset \RR^d$ be a compact convex set representing the space of images with $c$ colour channels and $h \times w$ pixels per channel (i.e. $d = c\times h \times w$). Let $\mu, \nu$, the source and target, respectively, be in $\mathcal{P}(\Omega)$, the space of probability distributions on $\Omega$. The $L^1$ optimal transport problem is the problem of finding a map $T: \Omega \rightarrow \Omega$ mapping $\mu$ onto $\nu$ while minimizing the average Euclidean travel distance. Formally, it is
\begin{equation}
\min_{T_\# \mu = \nu} \int_\Omega |x-T(x)| d\mu, \label{prob:L1OTproblem}
\end{equation} 
where the constraint $T_\# \mu = \nu$ specifies that the pushforward measure $T_\# \mu$, defined by the equality
\begin{equation}
T_\# \mu (E) = \mu(T^{-1}(E)),
\end{equation}
agrees with $\nu$. A solution $T_0$ to \eqref{prob:L1OTproblem} is called an optimal transport map for the pair $(\mu,\nu)$, and the existence of such a map can be guaranteed by the assumption that $\mu$ has a density with respect to Lebesgue measure on $\RR^d$, an assumption we will denote with the notation $\mu \ll \mathcal{L}_d$ (see, for example, Theorem 3.18 in \cite{santambrogio2015optimal}).

In general the existence of an optimal map $T_0$ is difficult to prove. In the Kantorovich relaxation of \eqref{prob:L1OTproblem}, one instead minimizes over probability distributions $\gamma \in \mathcal{P}(\Omega \times \Omega)$ with marginals given by $\mu$ and $\nu$. Formally, we write this as
\begin{align}
\min_{\gamma \in \Pi(\mu,\nu)}& \int_{\Omega\times \Omega} |x-y| d\gamma, \label{prob:Kantorovichrelaxation} \\ 
\Pi(\mu,\nu) := \{ \gamma \in \mathcal{P}(\Omega \times \Omega) &\mid (\pi_x)_\# \gamma = \mu, (\pi_y)_\# \gamma = \nu\}. 
\end{align}
Here $\pi_x(x,y) =x$, and $\pi_y(x,y) =y$ are the projection maps, and $\Pi(\mu,\nu)$ is called the set of admissible plans. The value of the minimization problem in \eqref{prob:Kantorovichrelaxation} is written $W_1(\mu,\nu)$ and is called the Wasserstein-1 distance between $\mu$ and $\nu$.

Under certain conditions (see Section 1.5 of \cite{santambrogio2015optimal}) the values of problems \eqref{prob:L1OTproblem} and \eqref{prob:Kantorovichrelaxation} agree. The existence of an optimal map is typically shown starting from Kantorovich Duality, which asserts that the optimal values in \eqref{prob:firstdual} and \eqref{prob:Kantorovichrelaxation} are the same. The relationship between a Kantorovich potential $u_0$ and an optimal map $T_0$ is well known (\eg \cite{evans1999differential, gulrajani2017improved}) but we include a statement of it for completeness.
\begin{lemma}
Let $\mu \ll \mathcal{L}_d$, $u_0$ be a Kantorovich potential and $T_0$ be an optimal map for the pair $(\mu,\nu)$. Then $\mu$-almost everywhere,\label{lem:graduistransportdirection}
\begin{equation}
x \neq T_0(x) \Rightarrow -\nabla u_0(x) = \frac{T_0(x)-x}{|T_0(x)-x|}.\label{eq:transportdirection}
\end{equation}
\end{lemma}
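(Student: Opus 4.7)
}

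The plan is to exploit the saturation of the Lipschitz bound for $u_0$ along the segment from $x$ to $T_0(x)$, which is forced by Kantorovich duality combined with the optimality of $T_0$. First, I would combine the two equivalent expressions for the Wasserstein-1 distance: by duality $W_1(\mu,\nu) = \int u_0\,d\mu - \int u_0\,d\nu$, and by optimality of $T_0$ together with $(T_0)_\#\mu = \nu$,
\begin{equation*}
W_1(\mu,\nu) = \int_\Omega |x-T_0(x)|\,d\mu(x) \quad\text{and}\quad \int u_0\,d\nu = \int u_0(T_0(x))\,d\mu(x).
\end{equation*}
Equating the two gives $\int_\Omega \bigl(u_0(x)-u_0(T_0(x)) - |x-T_0(x)|\bigr)\,d\mu(x) = 0$. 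Since $u_0$ is $1$-Lipschitz, the integrand is $\leq 0$ pointwise, so it must vanish $\mu$-a.e. Thus $u_0(x)-u_0(T_0(x)) = |x-T_0(x)|$ for $\mu$-a.e.\ $x$.

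Next, fix such an $x$ with $T_0(x)\neq x$, set $v = (T_0(x)-x)/|T_0(x)-x|$ and $L = |T_0(x)-x|$, and consider $f(t) := u_0(x + tv)$ for $t\in[0,L]$. Then $f$ is $1$-Lipschitz with $f(0)-f(L) = L$. From $f(0)-f(t) \leq t$ and $f(t)-f(L) \leq L-t$, adding recovers the equality $f(0)-f(L)=L$, which forces both inequalities to be equalities, in particular $f(0)-f(t) = t$ for every $t\in[0,L]$. Dividing by $t$ and letting $t\to 0^+$ yields that the directional derivative of $u_0$ at $x$ in the direction $v$ equals $-1$.

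To convert this into a statement about $\nabla u_0(x)$, I would invoke Rademacher's theorem: $u_0$, being Lipschitz on $\RR^d$, is differentiable $\mathcal{L}_d$-a.e., and since $\mu \ll \mathcal{L}_d$ it is differentiable $\mu$-a.e. At such a point of differentiability, the directional derivative equals $\nabla u_0(x)\cdot v$, so $\nabla u_0(x)\cdot v = -1$. Combined with $|\nabla u_0(x)| \leq 1$ (another consequence of $u_0$ being $1$-Lipschitz) and $|v|=1$, the equality case of Cauchy--Schwarz forces $\nabla u_0(x) = -v$, which is exactly \eqref{eq:transportdirection}. I expect the only subtle point to be keeping straight that all of the exceptional sets (where the duality identity fails, where $u_0$ fails to be differentiable, and where $|\nabla u_0|>1$) are $\mathcal{L}_d$-null and hence $\mu$-null by absolute continuity; no deeper regularity of $T_0$ beyond measurability is needed.
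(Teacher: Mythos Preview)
Your proof is correct and follows essentially the same approach as the paper: establish that the Lipschitz inequality $u_0(x)-u_0(T_0(x))=|x-T_0(x)|$ is saturated $\mu$-a.e., invoke Rademacher (using $\mu\ll\mathcal{L}_d$) for differentiability, and read off the gradient direction from the saturation. The only cosmetic differences are that the paper obtains the saturation set via the support inclusion $\spt((I,T_0)_\#\mu)\subset\{u_0(x)-u_0(y)=|x-y|\}$ rather than your integral-equality argument, and it cites the transport-ray Lemmas~\ref{lem:affineanddifferentiableoninterior}--\ref{lem:graduisraydirectionextends} for the gradient identity where you give the direct Cauchy--Schwarz argument inline; your version is slightly more self-contained but otherwise equivalent.
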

Equation \eqref{eq:transportdirection} implies that  given a step size $\eta_0$, if we were to apply the map
\begin{equation}
f_0(x) = x - \eta_0 \nabla u_0(x),\label{eq:effdeffinformal}
\end{equation}
then as long as $\eta_0$ is less than the minimal transport length we can be assured that under $f_0$ all points $x$ move towards their targets $T_0(x)$ without overshooting them, and thus we can expect that $W_1((f_0)_\# \mu,\nu)$ will be less than $W_1(\mu,\nu)$. This observation forms the basis for our algorithm, which we will describe in the next section.
\subsection{The algorithm}
\label{sec:algodef}
We now define the TTC algorithm. Set $\mu_0:= \mu$, and define, for $n =1 , 2 \ldots$
\begin{equation}
\mu_{n} = (f_{n-1})_\# \mu_{n-1}\label{eq:mundef},
\end{equation}
where $f_{n-1}$ is as in \eqref{eq:effdeffinformal}, but with $u_0$ replaced by a Kantorovich potential $u_{n-1}$ for the pair $(\mu_{n-1}, \nu)$. To compute such a Kantorovich potential we train standard critic neural networks from the literature using the one sided gradient penalty from \cite{gulrajani2017improved}, which was found to provide more stable training  \cite{petzka2018regularization}. In order to use this technique it is necessary to be able to sample both $\mu_n$ and $\nu$.  For $\mu_n$, we draw an initial point $x_0$ from $\mu_0$, and apply gradient descent maps from the sequence of pre-trained critics. Precisely, a sample $x \sim \mu_n$ is obtained via the formula
\begin{equation}
x = (I-\eta_{n-1}\nabla u_{n-1}) \circ \ldots \circ (I-\eta_0 \nabla u_0)(x_0).\label{eq:generator_formula}
\end{equation}
Note that in light of \eqref{eq:generator_formula}, we can view TTC as building up a generator layer by layer using the gradients of the critics, as in \cite{nitanda2018gradient}.

We choose the step size via
\begin{equation}
\eta_n = \theta W_1(\mu_n,\nu),\label{eq:stepsizedef}
\end{equation}
where $\theta \in (0,1)$ is a hyperparameter. We note again that the value of $W_1(\mu_n,\nu)$ is available as a by-product of computing a Kantorovich potential, so this choice of adaptive step size requires no extra computation in practice. More precisely, we use the negative of the minimal value of the functional from WGAN-GP \cite{gulrajani2017improved}, that is
\begin{equation}
W_1(\mu_n,\nu) \approx \frac{1}{M}\sum_{j=1}^M u_n(x_j) - u_n(y_j) - \lambda (|\nabla u_n(\tilde{x}_j)| - 1)_+^2, \label{eq:w1approx}
\end{equation}
where the $x_j$ and $y_j$ are samples from $\mu_n$ and $\nu$ respectively, $M$ is the mini-batch size, $\tilde{x}_j$ is a random convex combination of $x_j$ and $y_j$ as in \cite{gulrajani2017improved} and $(z)_+ = \max(0, z)$. When training WGANs, researchers will typically use $\lambda = 10$ for the gradient penalty coefficient \cite{gulrajani2017improved, lunz2018adversarial, mukherjee2021end}, however we use a value of $\lambda = 1000$. In practice we found that this value of $\lambda$ stabilizes the estimates of $W_1(\mu_n,\nu)$ and the training of TTC; using smaller values of $\lambda$ leads to inflated estimates of $W_1(\mu_n,\nu)$, leading to overly large step sizes and unstable training. This is confirmed by the recent analysis in \cite{milne2021wasserstein}, which shows that at best the value in \eqref{eq:w1approx}, in expectation, converges to $W_1(\mu_n,\nu)$ like $O(\lambda^{-1})$. 

Depending on the mini-batch size $M$ the value of \eqref{eq:w1approx} can vary considerably across mini-batches, so we compute an average over the last $100$ mini-batches used to train the critic $u_n$. The method for training TTC is summarized in Algorithm \ref{alg:TTC}. 
\begin{algorithm}
\SetAlgoLined
\KwData{Samples from source $\mu$ and target $\nu$, untrained critics $(u_n)_{n=0}^{N-1}$ with parameters $(w_n)_{n=0}^{N-1}$, gradient penalty coefficient $\lambda$, number of critic iterations $C$, batch size $M$, Adam parameters $(\epsilon_c, \beta_1,\beta_2)$.}
\KwResult{A distribution $\mu_n$ which can be sampled from $\mu$, $(u_n)_{n=0}^{N-1}, (\eta_n)_{n=0}^{N-1}$ via \eqref{eq:generator_formula}.}
\For{$n \in \{0, \ldots, N-1\}$}{
 \For{$i\in \{1, \ldots, C\}$}{
  \For{$j \in \{1, \ldots, M\}$}{
  	Sample $x_j \sim \mu_0$, $y_j \sim \nu$, $t_j \sim U([0,1])$\;
  	\For{$k \in \{0, \ldots, n-1\}$}{
  		$x_j \leftarrow x_j- \eta_k \nabla u_k(x_j)$\;}
  		$\tilde{x}_j \leftarrow (1-t_j)x_j + t_jy_j$\;
  		$L_{ij} \leftarrow u_n(y_j) - u_n(x_j)$ \\
  		$\hspace{0.4 in}+ \lambda (|\nabla u_n(\tilde{x}_j)|-1)_+^2$\;
  		  		}
  		$L_i \leftarrow \frac{1}{M}\displaystyle{\sum_{j=1}^M} L_{ij}$ \;
  		$w_n \leftarrow \text{Adam}(L_i, \epsilon_c, \beta_1,\beta_2)$ \;}
  $\eta_n \leftarrow \frac{-\theta}{100}\displaystyle{\sum_{i=C-99}^C}L_i$\;
  		}
\caption{TTC Training}\label{alg:TTC}
\end{algorithm}

\subsection{Convergence results for TTC}
\label{sec:theoalgoandconvergence}
In this section we will provide guarantees that up to some iterate, $W_1(\mu_n, \nu)$ decreases with geometric rate determined by $\theta$. Detailed proofs for all mathematical results can be found in Appendix \ref{app:proofs}.

Let $u_n$ be a Kantorovich potential for the pair $(\mu_n, \nu)$. Since $u_n$ is Lipschitz continuous, $\nabla u_n$ is \textit{a priori} only defined almost everywhere. To provide a convergence analysis, it is convenient to extend $f_n$ to the negligible set where $u_n$ is not differentiable via the formula 
\begin{equation}
f_n(x) :=\begin{cases}
x- \eta_n \nabla u_n(x) &\quad {\text{ if }\nabla u_n(x) \text{ exists}},\\
x &\quad \text{ else.}
\end{cases}	  \label{eq:effdeff}
\end{equation}
We emphasize that this extension does not affect the definition of the measures $\mu_n$ as long as they have densities, which we will prove up to a certain iterate. In order to state our main results we first need to define the minimal transportation length in an appropriate way.
\begin{definition}
For $\mu, \nu \in \mathcal{P}(\Omega)$, $\mu \ll \mathcal{L}_d$, let $T_0$ be an optimal transport map for the pair $(\mu,\nu)$. Set
\begin{align}
\ell_0(T_0) &:= \text{essinf}_{\mu} |I - T_0|,\\
&= \sup\{ \ell \in \RR  \mid \mu\left(\{ x \in \Omega \mid |x-T_0(x)| <\ell \}\right) = 0\}. 
\end{align}
\end{definition}
Our first theorem gives precise information on the effects of one iteration of TTC.
\begin{theorem}
Suppose that $\mu \ll \mathcal{L}_d$ and let $u_0$ be a Kantorovich potential for the pair $(\mu,\nu)$. Set $\tilde{\mu}= (f_0)_\# \mu$ where $f_0$ is defined in \eqref{eq:effdeff}. Let $T_0$ be an optimal transport map for the pair $(\mu,\nu)$. If 
\begin{equation}
\eta_0 < \ell_0(T_0),\label{eq:stepsizelessthanminimallength}
\end{equation}
then
\begin{enumerate}[i.]
\item $u_0$ is a Kantorovich potential for the pair $(\tilde{\mu},\nu)$,\label{item:u0iskantpot}
\item $W_1(\tilde{\mu},\nu) = W_1(\mu,\nu) - \eta_0$,\label{item:exactw1value}
\item With $g_0: \Omega \rightarrow \RR^d$ defined by\label{item:optimalmap}
\begin{equation}
g_0(x) = \begin{cases}
x + \eta_0 \nabla u_0(x) &\quad {\nabla u_0(x) \text{ exists}},\\
x &\quad \text{else,}
\end{cases} 
\end{equation}
then $T_{1}:= T_0 \circ g_0$ is an optimal map for the pair $(\tilde{\mu},\nu)$ and
\begin{equation}
\ell_0(T_1) \geq \ell_0(T_0) - \eta.
\end{equation}
\item $\tilde{\mu}$ has density $\tilde{\rho}$ satisfying, for $\mu$ almost all $x$,
\begin{equation}
    \tilde{\rho}(f_0(x)) = \rho(x) |D g_0(f_0(x))|,\label{eq:densityformula}
\end{equation}
where $\rho(x)$ is the density for $\mu$. \label{item:densityformula}
\end{enumerate}\label{prop:propertiesofnextmu}
\end{theorem}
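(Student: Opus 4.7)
The unifying fact behind all four claims is the \emph{transport ray} structure of the Kantorovich potential $u_0$. Kantorovich duality and the $1$-Lipschitz property of $u_0$ together force the identity $u_0(x) - u_0(T_0(x)) = |x - T_0(x)|$ for $\mu$-a.e.\ $x$, and the triangle inequality then forces $u_0$ to decrease at unit rate along the entire segment $[x, T_0(x)]$. Combined with Lemma \ref{lem:graduistransportdirection} and the hypothesis $\eta_0 < \ell_0(T_0)$, this implies that $f_0(x) = x - \eta_0 \nabla u_0(x)$ lies strictly inside $[x, T_0(x)]$ for $\mu$-a.e.\ $x$, yielding the two consequences I will use throughout: $u_0(f_0(x)) = u_0(x) - \eta_0$, and $\nabla u_0(f_0(x)) = \nabla u_0(x)$ (since the gradient is constant along a transport ray). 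The second consequence says that $g_0 \circ f_0 = \mathrm{id}$ $\mu$-a.e., so $g_0$ is a left inverse to $f_0$ on $\mathrm{spt}(\mu)$.

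For items \ref{item:u0iskantpot} and \ref{item:exactw1value}, my plan is to squeeze $W_1(\tilde{\mu},\nu)$ from both sides down to $W_1(\mu,\nu) - \eta_0$. The dual lower bound comes from testing against the admissible $u_0$:
\begin{equation*}
\int u_0\, d\tilde{\mu} - \int u_0\, d\nu = \int [u_0(x) - \eta_0]\, d\mu - \int u_0\, d\nu = W_1(\mu,\nu) - \eta_0.
\end{equation*}
The matching primal upper bound comes from the explicit coupling $(f_0, T_0)_\# \mu \in \Pi(\tilde{\mu}, \nu)$, whose cost is $\int |f_0(x) - T_0(x)|\, d\mu = \int [|x - T_0(x)| - \eta_0]\, d\mu = W_1(\mu, \nu) - \eta_0$. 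Equality of the two bounds simultaneously certifies $u_0$ as a Kantorovich potential for $(\tilde{\mu},\nu)$ and pins down the exact value of $W_1(\tilde{\mu},\nu)$. For item \ref{item:optimalmap}, the identity $g_0 \circ f_0 = \mathrm{id}$ $\mu$-a.e.\ gives $T_1 \circ f_0 = T_0$ $\mu$-a.e., so $(T_1)_\# \tilde{\mu} = (T_0)_\# \mu = \nu$, and its cost under $\tilde{\mu}$ equals $W_1(\tilde{\mu},\nu)$ by the computation above, giving optimality. The bound $\ell_0(T_1) \geq \ell_0(T_0) - \eta_0$ is then immediate from $|f_0(x) - T_1(f_0(x))| = |x - T_0(x)| - \eta_0 \geq \ell_0(T_0) - \eta_0$ for $\tilde{\mu}$-a.e.\ $\tilde{x} = f_0(x)$.

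The main obstacle is item \ref{item:densityformula}. Formally the formula is the standard pushforward identity combined with $|\det Dg_0(f_0(x))| = |\det Df_0(x)|^{-1}$, which follows from differentiating $g_0 \circ f_0 = \mathrm{id}$. The difficulty is that $\nabla u_0$ is generically only bounded and measurable, so $f_0$ need not be $C^1$ or even Lipschitz in general, and a naive change of variables does not apply. My plan is to restrict to a full $\mu$-measure subset of $\mathrm{spt}(\mu)$ on which $\nabla u_0$ is differentiable both along transport rays and sufficiently regularly transverse to them, apply the area formula for Lipschitz maps on a suitable exhaustion of this set, and use the $\mu$-a.e.\ injectivity of $f_0$ from $g_0 \circ f_0 = \mathrm{id}$ to transfer the resulting density identity to $\tilde{\mu}$. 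An alternative route is to mollify $u_0$, apply the smooth change of variables, and pass to the limit using the stability of the transport-ray structure under uniform convergence of potentials. Either route requires delicate regularity considerations, which is where I expect the bulk of the technical work to lie.
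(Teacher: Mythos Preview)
Your proposal is correct and for parts \ref{item:u0iskantpot}--\ref{item:optimalmap} matches the paper's argument exactly: squeeze $W_1(\tilde\mu,\nu)$ between the dual value at $u_0$ and the cost of the plan $(f_0,T_0)_\#\mu$, then use $g_0\circ f_0=\mathrm{id}$ $\mu$-a.e.\ for the optimality of $T_1$ and the bound on $\ell_0(T_1)$.

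For part \ref{item:densityformula} your first route is the one the paper takes, and you have correctly identified both the obstacle and the shape of the fix. The one ingredient you have not named, and which the paper relies on, is a \emph{quantitative} regularity result for $\nabla u_0$: on the sets $A_j=\{z:\min(\ell_+(z),\ell_-(z))>1/j\}$ the gradient is actually Lipschitz with constant $4j$ (this is from Caffarelli--Feldman--McCann). The paper builds a full $\mu$-measure set $P$ of points that lie strictly inside their transport rays with descending length at least $\eta_0+\epsilon$ and positive ascending length; then $f_0(P)$ sits inside some $A_j$, so $g_0$ is Lipschitz and injective there and the Area Formula applies directly. Your phrase ``sufficiently regularly transverse'' is exactly this Lipschitz control, but you will need that specific lemma to make the exhaustion argument go through. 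Your alternative mollification route is not what the paper does and would require separate work to control the limit of the Jacobians.
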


We now iterate Theorem \ref{prop:propertiesofnextmu} (specifically part \ref{item:exactw1value}) to obtain our convergence results. In view of \eqref{eq:stepsizelessthanminimallength}, this requires that $\eta_n$ is less than the minimal transportation length for the pair $(\mu_n, \nu)$. Recalling the definition of $\eta_n$ (see \eqref{eq:stepsizedef}) and that $W_1(\mu_n,\nu)$ is the mean transportation length according to $\mu_n$ (see \eqref{prob:L1OTproblem}), we see that $\eta_n$ will satisfy this requirement provided the transport distances are  clustered around the mean and $\theta$ is small enough.

More precisely, the next theorem shows that up to an iterate which is determined by the ratio $\ell_0(T_0)/W_1(\mu,\nu)$, which measures the clustering of the transport distances around the mean, the sequence $(\mu_n)_{n=0}^\infty$ converges to $\nu$ at a geometric rate.
\begin{theorem}
Let $T_0$ be an optimal transport map for the pair $(\mu,\nu)$. The measure $\mu_n$ defined iteratively by \eqref{eq:mundef} satisfies
\begin{equation}
W_1(\mu_n,\nu) = (1-\theta)^nW_1(\mu,\nu)\label{eq:W1estimate}
\end{equation}
for all $n \in \{ 0, \ldots, N(\theta)\}$, where
\begin{equation}
N(\theta) = \lceil \log_{1-\theta}\left(1-\ell_0(T_0)/W_1(\mu_0,\nu)\right) \rceil -1,
\end{equation}
where $\lceil \cdot \rceil$ denotes the ceiling function. In particular, if $\ell(T_0) < W_1(\mu,\nu)$, then
\begin{equation}
W_1(\mu_{N(\theta)}, \nu) \leq \frac{W_1(\mu,\nu) - \ell_0(T_0)}{1-\theta}.
\end{equation}
\label{prop:approximateconvergence}
\end{theorem}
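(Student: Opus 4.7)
The plan is to proceed by induction on $n$, using Theorem \ref{prop:propertiesofnextmu} as the one-step engine. As inductive hypothesis at step $n$ I would carry the conjunction of three statements: (a) $\mu_n \ll \mathcal{L}_d$, which is what lets us apply Theorem \ref{prop:propertiesofnextmu} to the pair $(\mu_n,\nu)$ at all; (b) the target formula $W_1(\mu_n,\nu) = (1-\theta)^n W_1(\mu,\nu)$; and (c) the existence of an optimal transport map $T_n$ for $(\mu_n,\nu)$ satisfying $\ell_0(T_n) \geq \ell_0(T_0) - \sum_{k=0}^{n-1} \eta_k$. The base case $n=0$ is immediate from the standing assumption $\mu \ll \mathcal{L}_d$ and the existence of $T_0$.

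For the inductive step from $n$ to $n+1$, the crucial verification is that the hypothesis $\eta_n < \ell_0(T_n)$ of Theorem \ref{prop:propertiesofnextmu} holds. Using (b) we have $\eta_n = \theta(1-\theta)^n W_1(\mu,\nu)$, and using (c) together with the geometric series identity $\sum_{k=0}^{n-1}\theta(1-\theta)^k = 1-(1-\theta)^n$ we get $\ell_0(T_n) \geq \ell_0(T_0) - W_1(\mu,\nu)\bigl(1-(1-\theta)^n\bigr)$. Rearranging, the required inequality $\eta_n < \ell_0(T_n)$ is equivalent to $(1-\theta)^{n+1} > 1 - \ell_0(T_0)/W_1(\mu,\nu)$, i.e., to $n+1 < \log_{1-\theta}\bigl(1-\ell_0(T_0)/W_1(\mu,\nu)\bigr)$. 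The ceiling-based definition of $N(\theta)$ is engineered precisely so that this strict inequality holds for every $n \leq N(\theta)-1$. Once $\eta_n < \ell_0(T_n)$ is established, parts \ref{item:exactw1value}, \ref{item:optimalmap}, and \ref{item:densityformula} of Theorem \ref{prop:propertiesofnextmu} respectively propagate (b), (c), and (a) to step $n+1$, closing the induction and yielding $W_1(\mu_n,\nu) = (1-\theta)^n W_1(\mu,\nu)$ for all $n \in \{0,\ldots,N(\theta)\}$.

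For the ``in particular'' clause, I would simply substitute $n=N(\theta)=\lceil A\rceil - 1$, where $A := \log_{1-\theta}\bigl(1-\ell_0(T_0)/W_1(\mu,\nu)\bigr)$. Since $\lceil A\rceil \geq A$ and $1-\theta \in (0,1)$, the map $x\mapsto (1-\theta)^x$ is decreasing, so $(1-\theta)^{\lceil A\rceil} \leq (1-\theta)^A = 1 - \ell_0(T_0)/W_1(\mu,\nu)$; dividing by $1-\theta$ and multiplying by $W_1(\mu,\nu)$ gives exactly the stated bound. The main obstacle is getting the book-keeping with the ceiling right at the boundary: one must confirm that the strict inequality required by Theorem \ref{prop:propertiesofnextmu} survives even when $A$ happens to be an integer, but this is guaranteed by the elementary fact $\lceil A\rceil < A+1$. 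The rest of the argument is pure geometric-series manipulation.
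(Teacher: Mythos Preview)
Your proposal is correct and follows essentially the same route as the paper: induction on $n$ using Theorem~\ref{prop:propertiesofnextmu} as the one-step engine, carrying absolute continuity, the identity $W_1(\mu_n,\nu)=(1-\theta)^nW_1(\mu,\nu)$, and an optimal map $T_n$ with $\ell_0(T_n)\geq \ell_0(T_0)-W_1(\mu,\nu)(1-(1-\theta)^n)$, then verifying $\eta_n<\ell_0(T_n)$ via the same logarithmic/ceiling bookkeeping. The only noteworthy difference is that the paper additionally carries the invariant that the \emph{original} potential $u_0$ remains Kantorovich for every pair $(\mu_n,\nu)$ and invokes a separate lemma (equality $\mu_{n-1}$-a.e.\ of $\nabla u_0$ and $\nabla u_{n-1}$) so that its inductively defined $T_n$, built from $g_{n-1}$ written in terms of $u_0$, coincides with the one Theorem~\ref{prop:propertiesofnextmu} produces from the algorithm's chosen $u_{n-1}$; your version sidesteps this by applying Theorem~\ref{prop:propertiesofnextmu} directly with whatever $u_n$ the algorithm selects, which is a mild streamlining.
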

Theorem \ref{prop:approximateconvergence} gives us precise knowledge of $W_1(\mu_n,\nu)$ up until $n = N(\theta)$. At this point the convergence rate ceases to apply because the step size $\eta_{N(\theta)}$ will exceed the current minimal transport length. In principle, we may then reduce $\theta$ until the step is small enough and resume iterations. In practice, however, this is difficult because the minimal transport length is unknown. Further, we found that a constant value of $\theta$ was sufficient to produce high quality results. Note that if no positive value of $\theta$ will make the current step size less than the minimal transport length, then this length must be zero, so a non-negligible set of generated data is essentially indistinguishable from real data. 

Let us also observe that according to Theorem \ref{prop:propertiesofnextmu}, the initial $u_0$ will be a Kantorovich potential for the pair $(\mu_n, \nu)$ until $\eta_n$ is larger than the minimal transport distance for that pair, and hence could be re-used as the critic. In the TTC algorithm, however, we train a new critic at each step. This is for two reasons. First, since the minimal transport distance is unknown to us, it would be difficult to know when we should begin re-training $u_0$. Second, since $u_0$ is parametrized by a neural network, it may not be precisely equal to the ideal Kantorovich potential. Motivated by Theorem \ref{prop:propertiesofnextmu}, however, we initialize the parameters of $u_n$ at the parameters of the preceding critic $u_{n-1}$, which we found leads to improved performance.

\section{Experiments}
We apply TTC to image generation, translation, and denoising. Regarding image generation, we test TTC against WGAN-GP for a fixed generator and discriminator on three standard datasets and show that TTC produces more realistic images, as measured by lower FID. We then apply TTC to the translation of photographs to Monet paintings using the datasets in \cite{zhu2017unpaired}. We finish with an application of TTC to image denoising inspired by \cite{lunz2018adversarial}.
\subsection{Comparison to WGAN-GP}
\label{sec:ttcvswgangp}
As a first experiment we compare TTC to WGAN-GP on the tasks of generating the MNIST \cite{lecun1998gradient}, FashionMNIST (abbreviated here as F-MNIST) \cite{xiao2017fashion}, and CIFAR-10 \cite{krizhevsky2009learning} datasets. We select the InfoGAN architecture \cite{chen2016infogan} for our generator and discriminator. We train each generative technique for a fixed number of minibatches of training data (100,000 for MNIST/F-MNIST, and 200,000 for CIFAR-10). In order to have a fair starting point, when training TTC we initialize the source distribution $\mu_0$ to be the output of the corresponding untrained generator from WGAN-GP; thus, the initial FIDs for both techniques are equal when given the same initialization. We also run a similar experiment for TTC that uses the standard normal distribution on $\RR^d$ as the source $\mu_0$. The hyperparameters for TTC and WGAN-GP for each experiment are summarized in Table \ref{table:TTCvsWGANGPsetup}. Where possible, the same hyperparameters are used for both techniques. Important exceptions to this are $\theta$ and $\epsilon_g$ (the generator learning rate) which are specific to TTC and WGAN-GP, respectively, and strongly impact performance. Both of these hyperparameters are optimized via a grid search for each dataset, and the best performing value is used for all experiments. For all datasets and algorithms we measure performance by calculating the FID using a PyTorch implementation \cite{Seitzer2020FID} every $n$ minibatches ($n=10{\small,}000$ for MNIST/F-MNIST, and $n = 20{\small,}000$ for CIFAR-10). FIDs are computed by generating a set of $10{\small,}000$ images and comparing these to the standard test set from each dataset. Both techniques are trained using five random initializations, and the average values of the best FID over each training run are recorded in Table \ref{table:TTCvsWGANGP}. Some examples of generated images are given in Appendix \ref{sec:generative_appendix}.

\begin{table}
\centering
\begin{tabular}{lcccc}
\toprule
\multicolumn{1}{l}{Dataset} &
\multicolumn{3}{c}{TTC}    &
\multicolumn{1}{c}{WGAN-GP}    \\ 
\cmidrule(lr){2-4}
\cmidrule(lr){5-5}
 & $N$ & $C$ & $\theta$ & $\epsilon_g$  \\
\midrule
MNIST & $20$ &  $5000$ & $0.9$ & $0.001$ \\
\midrule
F-MNIST & $20$ & $5000$ & $0.9$ & $0.001$ \\
\midrule
CIFAR-10 & $40$ & $5000$ & $0.9$ & $0.0005$ \\
\bottomrule
\end{tabular}
\caption{Experimental setup for comparing WGAN-GP to TTC. The symbols $\epsilon_c, \epsilon_g$ represent the learning rates of the critic and generator, respectively.  The values of the parameters in Algorithm \ref{alg:TTC} were taken to be $\lambda = 1000, M = 50, \epsilon_c = 0.0001, \beta_1 = 0.5, \beta_2 = 0.999$  for all experiments and both techniques. Note that twice as many critics are used for CIFAR-10 experiments on TTC since twice as many minibatches of training are available.}\label{table:TTCvsWGANGPsetup}
\end{table}

\begin{table}
\centering
\begin{tabular}{lccc}
\toprule
Dataset & WGAN-GP & TTC 1 & TTC 2  \\
\midrule
MNIST & $20.9 \pm 0.6$  & $19.0 \pm 1.5$ & $\mathbf{18.0} \pm 0.3$ \\
\midrule
F-MNIST & $26.9 \pm 1.6$ & $22.2 \pm 0.7$ & $\mathbf{22.1} \pm 0.5$\\
\midrule
CIFAR-10 & $29.2 \pm 0.7$ & $28.7 \pm 0.6$ & $\mathbf{27.3} \pm 0.6$\\
\bottomrule
\end{tabular}
\caption{Best FIDs obtained over the course of training for WGAN-GP and TTC. TTC 1 uses an untrained generator to create the source $\mu_0$, while TTC 2 starts from a standard normal distribution on $\RR^d$. The FIDs are reported as mean $\pm$ standard deviation, with statistics computed over five training runs. TTC reliably produces better FID values than WGAN-GP for all datasets; the improvement is magnified when using the standard normal distribution on $\RR^d$ as $\mu_0$.}\label{table:TTCvsWGANGP}
\end{table}

\subsection{Image translation}
\label{sec:translation}
By simply changing the source and target distributions we can use TTC for image translation. We demonstrate this here using datasets from \cite{zhu2017unpaired}. Specifically, we used their Photograph dataset as $\mu_0$, rescaled to have $h=w=128$, and used their Monet dataset as $\nu$, with each target image taken as a random crop of a Monet painting with $h=w=128$. Since the InfoGAN discriminator we used in Section \ref{sec:ttcvswgangp} is not appropriate for images of this size, here we choose the SNDCGAN discriminator from \cite{kurach2019large} (see also \cite{miyato2018spectral}).

Besides making $N = 30$ and $M = 16$ the hyperparameters for this application are exactly the same as in Table \ref{table:TTCvsWGANGPsetup}. An example photograph $x_0$ and the result of applying \eqref{eq:generator_formula} to it with $n = 30$ is given in \Cref{fig:translationexample}; additional examples can be found in Appendix \ref{app:translation}.

\begin{figure}
    \centering
    \includegraphics[width = 0.45 \textwidth, viewport=50 85 415 260, clip = True]{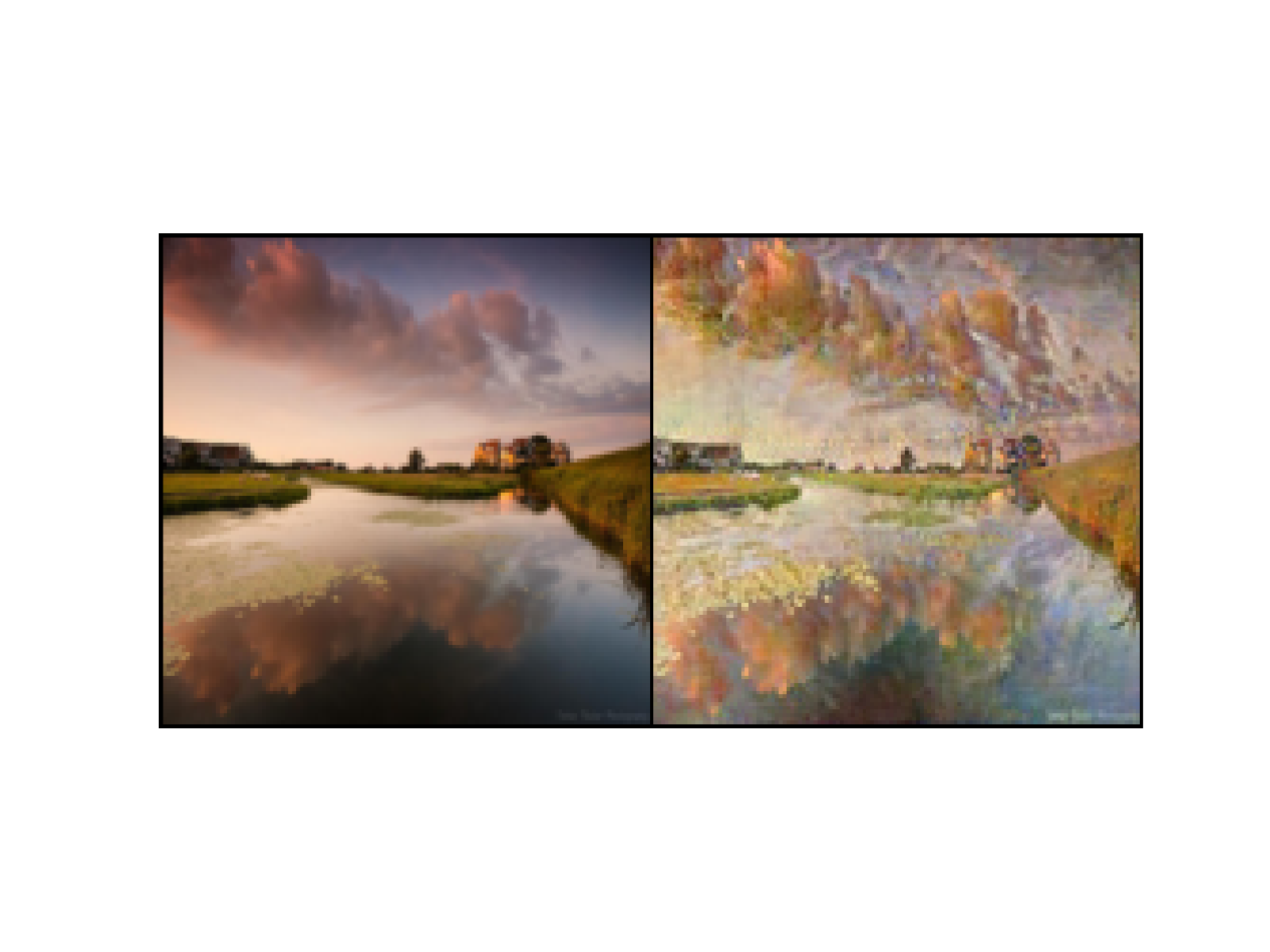}
    \caption{An example of TTC applied to translating landscape photos into Monet paintings.}\label{fig:translationexample}
\end{figure}

\subsection{Image denoising}
\label{sec:denoising}
To further demonstrate the multipurpose flexibility of TTC, we apply it to restore images that have been corrupted with Gaussian noise. Specifically, we follow the experimental framework of \cite{lunz2018adversarial}, where $\nu$ consists of random crops of the BSDS500 dataset \cite{arbelaez2010contour}, and $\mu_0$ is $\nu$ corrupted by adding i.i.d. Gaussian noise with standard deviation $\sigma$. In this setting TTC bears an interesting relation to \cite{lunz2018adversarial}. In that paper a critic $u_0$ is obtained using the method from \cite{gulrajani2017improved} for the pair $(\mu_0, \nu)$, which is then used as a learned regularizer in an inverse problem. This is applied to image restoration in the following way; given a noisy observation $x_0$, a denoised version is obtained by solving the minimization problem
\begin{equation}
\min_{x \in \Omega} \frac{1}{2}|x-x_0|^2 + \eta u_0(x),\label{prob:advreg}
\end{equation} 
where the parameter $\eta$ is estimated from the noise statistics. Incidentally, this requires the noise model to be known \textit{a priori}, as in \cite{moran2020noisier2noise}; as we will see, TTC requires no such model since the corresponding step size parameter is estimated with $\theta W_1(\mu,\nu)$. The next proposition shows that provided $\eta$ is small enough the solution to \eqref{prob:advreg} is equivalent to the solution obtained from a single step of TTC with step size $\eta$. As such, in this context TTC can be thought of as an iterated form of the technique in \cite{lunz2018adversarial}, where the critic is updated after each reconstruction step.
\begin{proposition}
\label{prop:advregis1ttc}
Let $T_0$ be an optimal transport map for the pair $(\mu,\nu)$. If $\eta < \ell_0(T_0)$, then for $\mu$-almost all $x_0$ there is a unique solution to \eqref{prob:advreg} given by
\begin{equation}
x_1 = x_0 - \eta \nabla u_0(x_0).
\end{equation}
\end{proposition}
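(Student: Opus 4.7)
The plan is to exploit the fact that along a transport ray the Kantorovich potential $u_0$ decreases at the full Lipschitz rate $-1$, so that $x_1 = x_0 - \eta \nabla u_0(x_0)$ walks a distance $\eta$ along the ray through $x_0$ and maximally ``cashes in'' the $\eta u_0$ term without overshooting. First I would fix $x_0$ outside the $\mu$-null set on which either $u_0$ fails to be differentiable (negligible by Rademacher's theorem combined with $\mu \ll \mathcal{L}_d$) or the conclusion of Lemma \ref{lem:graduistransportdirection} fails. At such a point $|\nabla u_0(x_0)| = 1$ and $-\nabla u_0(x_0) = (T_0(x_0) - x_0)/|T_0(x_0) - x_0|$, and since $\eta < \ell_0(T_0) \leq |T_0(x_0) - x_0|$, the candidate $x_1$ lies on the open segment from $x_0$ to $T_0(x_0)$. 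Because $u_0$ is 1-Lipschitz while $u_0(T_0(x_0)) - u_0(x_0) = -|T_0(x_0) - x_0|$ (by the Kantorovich duality/complementary slackness that underlies Lemma \ref{lem:graduistransportdirection}), $u_0$ must be affine with slope $-1$ along this segment, so $u_0(x_1) = u_0(x_0) - \eta$.

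Next I would produce a one-line global lower bound for $F(x) := \tfrac{1}{2}|x-x_0|^2 + \eta u_0(x)$. Applying the 1-Lipschitz estimate $u_0(x) \geq u_0(x_0) - |x-x_0|$ and writing $r = |x - x_0|$,
\[
F(x) \;\geq\; \tfrac{1}{2} r^2 - \eta r + \eta u_0(x_0) \;=\; \tfrac{1}{2}(r-\eta)^2 + \eta u_0(x_0) - \tfrac{\eta^2}{2} \;\geq\; \eta u_0(x_0) - \tfrac{\eta^2}{2}.
\]
A direct computation gives $F(x_1) = \tfrac{1}{2}\eta^2 + \eta(u_0(x_0) - \eta) = \eta u_0(x_0) - \tfrac{\eta^2}{2}$, which matches the lower bound, so $x_1$ is a global minimizer.

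For uniqueness, I would suppose $x \in \Omega$ also attains the minimum. Then both inequalities above are equalities, which forces $|x-x_0| = \eta$ and $u_0(x_0) - u_0(x) = \eta$. Writing $x = x_0 + \eta v$ with $|v|=1$ and applying the 1-Lipschitz bound to the two sub-segments $[x_0, x_0+tv]$ and $[x_0+tv, x]$ yields $u_0(x_0) - u_0(x_0 + tv) = t$ for every $t \in (0, \eta]$. Dividing by $t$ and letting $t \to 0^+$, the differentiability of $u_0$ at $x_0$ gives $-\nabla u_0(x_0) \cdot v = 1$; since $|\nabla u_0(x_0)| = |v| = 1$, the Cauchy--Schwarz equality case forces $v = -\nabla u_0(x_0)$, and hence $x = x_1$.

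The mildly delicate step is the uniqueness argument, since it requires extracting a pointwise directional statement from a Lipschitz equality on a finite segment; the existence/minimality half is essentially a convexity-free calculation once the affine behaviour of $u_0$ on the transport ray is in hand, and invokes only Lemma \ref{lem:graduistransportdirection} and the definition of $\ell_0(T_0)$.
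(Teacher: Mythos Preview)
Your proof is correct. The existence half---the 1-Lipschitz lower bound $F(x)\geq \eta u_0(x_0)-\tfrac{\eta^2}{2}$ and the verification that $x_1$ attains it via the affine behaviour of $u_0$ on the transport ray---matches the paper's argument essentially line for line.

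For uniqueness, you take a different route from the paper. The paper observes that any second minimizer $x^*$ must satisfy $u_0(x_0)-u_0(x^*)=|x^*-x_0|$, so that $x_0$ lies on two distinct transport rays, and then invokes the structural fact that the set of such branching points is Lebesgue (hence $\mu$)-negligible. You instead work pointwise at each $x_0$ where $u_0$ is differentiable with $|\nabla u_0(x_0)|=1$: from $u_0(x_0)-u_0(x_0+\eta v)=\eta$ you extract the directional derivative $-\nabla u_0(x_0)\cdot v=1$ and conclude $v=-\nabla u_0(x_0)$ by the Cauchy--Schwarz equality case. Your argument is more self-contained, since it avoids appealing to the ray-intersection negligibility result and uses only Rademacher's theorem and Lemma~\ref{lem:graduistransportdirection}; the paper's version, on the other hand, makes the geometric obstruction to non-uniqueness explicit.
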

For a fair comparison of our results against the denoising technique in \cite{lunz2018adversarial} we use a discriminator from \cite{lunz2018adversarial} for this application. For TTC's hyperparameters, we use $N = 20$, $M = 16$ and $\theta = 0.7$, with all other hyperparameters as given in Table \ref{table:TTCvsWGANGPsetup}; interestingly, we found in this case that the performance is quite insensitive to variations in $\theta$. Table \ref{table:denoisingresults} shows mean PSNR values over a set of 128x128 crops from a held out test set from BSDS500 with 200 images. Figure \ref{fig:deer} shows one denoised example; additional examples are in Appendix \ref{app:denoising}. Though our PSNR values are somewhat lower than the state of the art (\eg \cite{moran2020noisier2noise}), we feel that our results are impressive given that we use unpaired data, we have no noise model, and that our technique was not specifically designed for image denoising. 

\begin{table}
\centering
\begin{tabular}{cccc}
\toprule
\multicolumn{1}{l}{} &
\multicolumn{3}{c}{Average PSNR (dB)} \\ 
\cmidrule(lr){2-4}
 $\sigma$ & Noisy Image & Adv. Reg. \cite{lunz2018adversarial} & TTC  \\
\midrule
0.1 & $20.0 \pm 0.03$ &  $27.1 \pm 0.7$ & $\mathbf{29.9} \pm 2.2$ \\
\midrule
0.15 & $16.5 \pm 0.03$ & $24.6 \pm 0.7$ & $\mathbf{27.9} \pm 2.3$ \\
\midrule
0.2 & $14.0 \pm 0.03$ & $22.9 \pm 0.6$ & $\mathbf{26.4} \pm 2.4$ \\
\bottomrule
\end{tabular}
\caption{Results for denoising experiments. PSNR values are reported as mean $\pm$ standard deviation, where the statistics are computed over the test set. In addition to having higher mean performance over the test set, TTC gives an improved PSNR for every image in the test set.}
\label{table:denoisingresults}
\end{table}
\begin{figure}
    \centering
    \includegraphics[width = 0.45 \textwidth, viewport=50 125 420 225, clip = True]{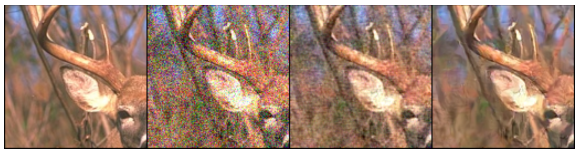}
    \caption{A single restored image from BSDS500 from noise level $\sigma = 0.15$. From left to right: original image, noisy image (PSNR = 16.4), restored image using \cite{lunz2018adversarial} (PSNR = 24.5), restored image using TTC (PSNR = 27.0).}\label{fig:deer}
\end{figure}

\section{Discussion and conclusion}
\label{sec:discussion and Conclusion}
\subsection{Discussion of limitations and societal impact}
In this section we will discuss the limitations and potential societal impact of our contribution. To start with, our theoretical analysis relies on being able to compute a Kantorovich potential $u_0$ for the pair $(\mu,\nu)$. Given recent results in \cite{milne2021wasserstein} showing that the optimization problem for learning critics from \cite{gulrajani2017improved} actually returns a function that solves a congested transport problem, which is distinct from a Kantorovich potential, one may ask to what extent our theoretical analysis applies to the types of critics learned in practice. The distinction between these problems decreases as $\lambda$ gets large, so we suspect that our assumption of being able to compute a Kantorovich potential is reasonable for the $\lambda$ value we used ($\lambda = 1000$), but this is certainly something to consider, especially for the smaller values of $\lambda$ typically used in the literature.

Disadvantages of TTC are its memory and time complexity, which are $O(N)$ and $O(N^2)$, respectively. Nevertheless, standard critic architectures in WGANs are typically simpler than their corresponding generators, so TTC can be faster than traditional WGANs for low $N$. In fact, we were able to generate high quality images in comparable time to WGANs without implementing ideas to reduce the time and memory complexity. 

TTC has outperformed WGANs in our experiments. However, given the results of \cite{lucic2017gans} showing that with enough hyperparameter optimization and random restarts many generative models achieve comparable performance to each other, it would be interesting to test whether the same trend holds for TTC versus WGANs.

In terms of societal impact, since we have demonstrated that TTC improves upon the performance of WGANs, in principle it may be used to generate more convincing deep-fakes, which pose a serious threat to the integrity of information online.  Though we are not experts in the detection of deep-fakes, we can speculate that the understanding of the training process furnished by our theoretical results may help the detection of fake data generated with TTC.

\subsection{Conclusion}
In this paper we have introduced TTC, a new algorithm for generative modelling. This was motivated in part by our observation that the directions in which the training of WGANs moves generated data are misaligned with the ideal directions given by a trained critic. TTC resolves this issue by dispensing with the generator altogether. We then presented a convergence analysis for TTC which applies until the algorithm produces a non-negligible set of high-quality data. As well, we presented experiments successfully applying our algorithm to image generation, translation, and denoising, illustrating the multipurpose nature of TTC.  

Furthermore, we provided an explicit formula for the updated density of $\mu$, which is characteristic of normalizing flows \cite{dinh2014nice, rezende2015variational, chen2018neural, grathwohl2018ffjord}. A trade-off between sample quality and density estimation is often discussed in the literature \cite{grover2017flow, theis2015note}. Since TTC generates high quality samples and has an iterative formula for the density, it may help to bridge this gap.  

\subsection*{Acknowledgements}
This research was supported in part by the NSERC Discovery Grant RGPIN-06329 and a University of Toronto Doctoral Completion Award. Experiments were run on the Graham and Narval clusters of Compute Canada.

{\small
\bibliographystyle{ieee_fullname}%
\bibliography{AnnotatedBibliography} 
}

\clearpage
\begin{appendices}

\section*{Appendix}
\section{Proofs of the mathematical results}
\label{app:proofs}
In order to prove \Cref{lem:graduistransportdirection}, \Cref{prop:propertiesofnextmu}, \Cref{prop:approximateconvergence} and \Cref{prop:advregis1ttc} we will need some terminology and results from the theory of $L^1$ optimal transport. Central to the story are the concept of transport rays, a term coined in \cite{evans1999differential}, which refers to segments over which the Lipschitz inequality of a $1$-Lipschitz function is saturated.
\begin{definition}
Let $u \in \onelip(\Omega)$, the set of Lipschitz functions on $\Omega$ with constant $1$. For $x, y \in \Omega$ the segment $[x,y] := \{ (1-t)x + ty \mid t \in [0,1]\}$ is called a transport ray of $u$ if
\begin{enumerate}
\item $x \neq y$,
\item $u(x) - u(y) = |x-y|$,
\item $[x,y]$ is not properly contained in any other segment $[z,w]$ satisfying properties 1 and 2.
\end{enumerate}
The open segment $]x,y[ := \{ (1-t)x + ty \mid t \in (0,1)\}$ is called the interior of the transport ray, and $x$ and $y$ are called its upper and lower endpoints respectively.
\end{definition}
\begin{lemma}[Essentially Lemmas 3.5 and 3.6 from \cite{santambrogio2015optimal}]
If $[x,y]$ is a transport ray of $u$ then for all $t \in [0,1]$,
\begin{equation}
u((1-t)x + ty) = (1-t) u(x) + t u(y). \label{eq:uisaffineonrays}
\end{equation}
Further, $u$ is differentiable for all  $z \in ]x,y[$, with derivative satisfying
\begin{equation}
\nabla u(z) = \frac{x-y}{|x-y|}. \label{eq:graduisraydirection}
\end{equation}\label{lem:affineanddifferentiableoninterior}
\end{lemma}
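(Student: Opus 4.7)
The plan is to prove the two assertions separately, both exploiting the fact that the transport ray condition $u(x)-u(y)=|x-y|$ saturates the Lipschitz inequality, leaving no slack for any triangle-type bound along the ray.

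For the affine identity on $[x,y]$, I would fix $t\in[0,1]$, set $z=(1-t)x+ty$, and apply the $1$-Lipschitz property to the pairs $(x,z)$ and $(z,y)$:
\begin{equation}
u(x)-u(z)\le |x-z|=t|x-y|,\qquad u(z)-u(y)\le |z-y|=(1-t)|x-y|.
\end{equation}
Adding these yields $u(x)-u(y)\le |x-y|$, which by the transport ray hypothesis is an equality. Hence both displayed inequalities must be equalities individually, and rearranging the first gives $u(z)=(1-t)u(x)+tu(y)$.

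For differentiability at an interior point $z=(1-t_0)x+t_0 y$ with $t_0\in(0,1)$, I would set $v:=(x-y)/|x-y|$ and, for $w=z+h$ with $|h|$ small enough, combine two $1$-Lipschitz estimates with the affine identity just established. The lower estimate $u(x)-u(w)\le|x-w|$ gives $u(w)-u(z)\ge t_0|x-y|-|x-w|$, and the upper estimate $u(w)-u(y)\le|w-y|$ gives $u(w)-u(z)\le|w-y|-(1-t_0)|x-y|$. Thus $u(w)-u(z)$ is sandwiched between two explicit functions of $h$.

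The remaining step is a Taylor expansion of $|\cdot|$ at the nonzero vectors $x-z=t_0(x-y)$ and $z-y=(1-t_0)(x-y)$. Writing
\begin{equation}
|x-w|=t_0|x-y|-v\cdot h+O(|h|^2),\qquad |w-y|=(1-t_0)|x-y|+v\cdot h+O(|h|^2),
\end{equation}
both bounds collapse to $u(w)-u(z)=v\cdot h+O(|h|^2)$, which is precisely the statement that $\nabla u(z)=v$. The main obstacle here is quantitative: the implicit constant in the Taylor remainder blows up as $z$ approaches an endpoint (the $t_0$ and $1-t_0$ factors in the denominators), which is exactly why the statement is restricted to the open segment $]x,y[$ and also why the endpoints can be genuine non-differentiability points where multiple rays meet.
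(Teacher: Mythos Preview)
Your proof is correct and follows the standard argument. Note, however, that the paper does not actually prove this lemma: it is quoted as a known result (``Essentially Lemmas 3.5 and 3.6 from \cite{santambrogio2015optimal}'') and no proof is given in the appendix. Your argument --- first squeezing the Lipschitz inequalities on the two subsegments to force affinity, then sandwiching $u(z+h)-u(z)$ between Taylor expansions of the distances to the two endpoints --- is precisely the approach taken in the cited reference, so there is nothing to contrast.
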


Lemma \ref{lem:graduisraydirectionextends} is an easy extension of Lemma \ref{lem:affineanddifferentiableoninterior} that holds when the function $u$ is also differentiable at the ray endpoints.
\begin{lemma}
If $[x,y]$ is a transport ray of $u$ and $u$ is differentiable at either endpoint then \eqref{eq:graduisraydirection} also holds at that endpoint.\label{lem:graduisraydirectionextends}
\end{lemma}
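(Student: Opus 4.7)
The plan is to combine the affineness of $u$ along the transport ray (which gives the derivative of $u$ along the ray direction) with the 1-Lipschitz constraint on $u$ (which bounds the norm of the gradient). Together these force the gradient at the endpoint to equal the ray direction, which is exactly the formula \eqref{eq:graduisraydirection}.

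First I would fix notation. Assume without loss of generality that $u$ is differentiable at the upper endpoint $x$ (the argument for $y$ is identical by symmetry). Let $v := (y-x)/|y-x|$, the unit vector pointing from $x$ into the interior of the ray. By the definition of a transport ray, $u(x)-u(y) = |x-y|$, and by \eqref{eq:uisaffineonrays} with $t$ replaced by a small parameter $s \in (0,1)$, we have
\begin{equation}
u(x + s|y-x|\,v) - u(x) = -s|y-x|.
\end{equation}
Dividing by $s|y-x|$ and letting $s \to 0^{+}$ gives the one-sided directional derivative of $u$ at $x$ in the direction $v$, namely $-1$.

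Next I would use differentiability. Since $u$ is assumed differentiable at $x$, the above one-sided directional derivative equals $\nabla u(x)\cdot v$. Hence
\begin{equation}
\nabla u(x)\cdot v = -1.
\end{equation}
Because $u \in \onelip(\Omega)$, we have $|\nabla u(x)| \leq 1$. Combining this with $|v|=1$ and the Cauchy–Schwarz inequality yields
\begin{equation}
-1 = \nabla u(x)\cdot v \geq -|\nabla u(x)||v| \geq -1,
\end{equation}
so equality holds throughout, forcing $\nabla u(x) = -v = (x-y)/|x-y|$, which is exactly \eqref{eq:graduisraydirection}. The case of the lower endpoint $y$ is handled identically, using the direction $(x-y)/|x-y|$ and the value $+1$ in place of $-1$.

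I do not anticipate any serious obstacle. The only subtle point is to keep track of signs and which endpoint plays which role, since the directional derivative along a ray is $-1$ when moving from the upper endpoint toward the lower one and $+1$ in the opposite direction; one needs the sign of the one-sided derivative to match the gradient-times-direction computation so that the Cauchy–Schwarz rigidity argument pins down $\nabla u$ uniquely.
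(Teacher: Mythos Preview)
Your proposal is correct. The paper does not give a self-contained proof but simply points to the argument inside the proof of Corollary~3.8 in \cite{santambrogio2015optimal}; your reasoning---using the affine formula \eqref{eq:uisaffineonrays} to compute the one-sided directional derivative along the ray, then invoking the $1$-Lipschitz bound and the equality case of Cauchy--Schwarz to pin down $\nabla u$---is exactly the standard argument that appears there.
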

\begin{proof}
The proof is contained in the proof of Corollary 3.8 from \cite{santambrogio2015optimal}.
\end{proof}
It was first shown in \cite{caffarelli2002constructing} that $\nabla u$ is actually Lipschitz away from the endpoints of transport rays. To state this result formally we must define the distance between a point $z$ in a transport ray and the endpoints of that ray.
\begin{definition}
Given $u\in \onelip(\Omega)$, set
\begin{equation}
D = \{ x \in \Omega \mid \nabla u(x) \text{ exists}\}.
\end{equation}
Define $\ell_+, \ell_- : D \rightarrow [0, \infty)$ by
\begin{equation}
\ell_\pm(z) = \sup\{ t\in [0,\infty) \mid  u(z \pm t \nabla u(z)) - u(z) = \pm t\}.
\end{equation}
\end{definition}
Note that if $\ell_+(z) >0$ or $\ell_-(z) >0$, $z$ must be in at least one transport ray, and if both are positive $z$ is on the interior of a unique ray by Corollary 3.8 from \cite{santambrogio2015optimal}.

The proof of the following result on the Lipschitz regularity of $\nabla u$ away from the ray endpoints is contained in the proof of Lemma 22 of \cite{caffarelli2002constructing}, however there it is stated with sufficiently specialized notation that it may be helpful to provide a restatement and reproof here.
\begin{proposition}
Let $u \in \onelip(\Omega)$ and define, for $j \in \mathbb{N}$, 
\begin{equation}
A_j = \{ z \in D \mid \min(\ell_-(z), \ell_+(z)) > 1/j\}.
\end{equation}
Then $z \mapsto \nabla u(z)$ is Lipschitz on $A_j$ with constant $4j$.\label{prop:Lipschitzgradient}
\end{proposition}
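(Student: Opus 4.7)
My plan is to apply the $1$-Lipschitz property of $u$ to two judiciously chosen pairs of points obtained by stepping a distance $1/j$ along the gradient directions from $z_1, z_2 \in A_j$, then square and add so that the cross terms cancel. The identity $|v_1 + v_2|^2 + |v_1 - v_2|^2 = 4$ (valid because $|\nabla u(z_i)| = 1$ on $A_j$) then converts the resulting inequality into a Lipschitz bound on the gradient.

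First I would collect preliminaries about any $z \in A_j$. Since $\ell_+(z), \ell_-(z) > 1/j > 0$, the point $z$ lies in the interior of a transport ray with direction $\nabla u(z)$. The function $t \mapsto u(z + t\nabla u(z)) - u(z) - t$ vanishes at $t = 0$, is non-increasing on $[0,\infty)$ by the $1$-Lipschitz continuity of $u$, and vanishes at $t = \ell_+(z)$ by continuity and the definition of the supremum, so it vanishes throughout $[0, \ell_+(z)]$. An analogous statement holds in the negative direction. In particular, $p_i^\pm := z_i \pm (1/j)\nabla u(z_i)$ lies in $\Omega$ and satisfies $u(p_i^\pm) = u(z_i) \pm 1/j$. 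Differentiating $u(z + t\nabla u(z)) = u(z) + t$ at $t = 0$ also yields $|\nabla u(z)|^2 = 1$.

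For the main estimate, fix $z_1, z_2 \in A_j$ and set $v_i = \nabla u(z_i)$, $w = z_1 - z_2$, $s = v_1 + v_2$, $\delta = v_1 - v_2$, and $\alpha = u(z_1) - u(z_2)$. Applying the two-sided $1$-Lipschitz inequality to the pairs $(p_1^+, p_2^-)$ and $(p_1^-, p_2^+)$ gives
\begin{equation*}
(\alpha + 2/j)^2 \leq |w + s/j|^2 \quad \text{and} \quad (\alpha - 2/j)^2 \leq |w - s/j|^2.
\end{equation*}
Adding these two bounds eliminates the cross terms $\pm (2/j)\, w \cdot s$, and after using $|s|^2 = 4 - |\delta|^2$, the inequality reduces to $|\delta|^2 \leq j^2(|w|^2 - \alpha^2) \leq j^2|w|^2$. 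This in fact shows $\nabla u$ is Lipschitz on $A_j$ with constant $j$, which certainly implies the claimed bound of $4j$.

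The only delicate technical point is verifying that $u(z + t\nabla u(z)) = u(z) + t$ holds for every $t \in [0, 1/j]$ rather than merely at the supremum $\ell_+(z)$; this is handled by the monotonicity observation above. Once this is settled, everything else is elementary algebra, and the fact that the argument gives a tighter constant than $4j$ suggests the proof in \cite{caffarelli2002constructing} proceeds along similar lines but uses a slightly less sharp estimate at the end.
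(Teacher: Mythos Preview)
Your proof is correct and takes a genuinely different route from the paper's. The paper splits into two cases according to whether $|z - z'| \geq 1/(2j)$; in the trivial case it uses $|\nabla u(z) - \nabla u(z')| \leq 2$, while in the other it slides $z'$ along its transport ray to a point $w'$ on the level set $\{u = u(z)\}$ and then invokes an external level-set estimate (Lemma~16 of \cite{caffarelli2002constructing}) of the form $|\nabla u(w') - \nabla u(z)| \leq \sigma^{-1}|w' - z|$, where $\sigma \geq 1/(2j)$ is the distance from $w'$ or $z$ to the nearest ray endpoint. Combining this with $|w' - z'| \leq |z - z'|$ via the triangle inequality yields the constant $4j$. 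Your argument is more elementary and sharper: it avoids both the case split and the external lemma, using only the $1$-Lipschitz bound at the four shifted points $p_i^\pm$ together with the parallelogram identity $|v_1 + v_2|^2 + |v_1 - v_2|^2 = 4$, and delivers the constant $j$ rather than $4j$. The looseness in the paper's bound comes from the crude handling of the far regime and the triangle-inequality doubling through $w'$. What the paper's approach buys is a more explicitly geometric picture of the level-set structure of transport rays, and it reuses a lemma of independent interest from \cite{caffarelli2002constructing}; your approach buys self-containment and the optimal constant.
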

\begin{proof}
Let $z, z' \in A_j$. Note that since $u \in \onelip(\Omega)$, if $|z-z'| \geq \frac{1}{2j}$ then we have the trivial Lipschitz bound
\begin{equation}
|\nabla u(z) - \nabla u(z')| \leq 2 \leq 4 j |z-z'|.\label{eq:triviallipbound}
\end{equation}
Hence, we focus on the case $|z-z'| < \frac{1}{2j}$. In this case the Lipschitz constant of $u$ allows us to bound the variation in $u$ on these points;
\begin{equation}
|u(z) - u(z')| < \frac{1}{2j}.\label{eq:boundonuvariation}
\end{equation}
Set $w' = z' + (u(z) - u(z')) \nabla u(z')$. By \eqref{eq:boundonuvariation}, we have that $w'$ and $z'$ are on the same transport ray. Indeed, $w'$ is at most $\frac{1}{2j}$ away from $z'$, and $z'$ is at least $\frac{1}{j}$ from the endpoints of the transport ray it is contained in by definition of $A_j$.

Since $w'$ and $z'$ are on the same transport ray, $w'$ lies on the same level set of $u$ as $z$. Indeed, using Lemma \ref{lem:affineanddifferentiableoninterior},
\begin{align}
u(w') &= u(z' + (u(z) - u(z'))\nabla u(z')),\\
&= u(z') + (u(z) - u(z')),\\
&= u(z).
\end{align}
Since both $w'$ and $z$ are interior points of their transport rays and exist on the same level set of $u$ we can then invoke Lemma 16 from \cite{caffarelli2002constructing} to obtain that
\begin{equation}
|\nabla u(w') - \nabla u(z) | \leq \frac{1}{\sigma}|w'-z|,
\end{equation}
where $\sigma$ is the minimal distance from $w'$ or $z$ to the endpoints of its transport ray; by construction this is at least $\frac{1}{2j}$. Hence,
\begin{equation}
|\nabla u(w') - \nabla u(z) | \leq 2j|w'-z|.
\end{equation}
Given that $\nabla u(w') = \nabla u(z')$, we therefore have
\begin{equation}
|\nabla u(z') - \nabla u(z) | \leq 2j|z'-z| + 2j|w'-z'|.\label{eq:preliminlipbound}
\end{equation}
Estimating the last term,
\begin{align}
|w'-z'| &= |u(w') - u(z')|,\\
&= |u(z) - u(z')| ,\\
&\leq |z-z'|,
\end{align}
whence \eqref{eq:preliminlipbound} gives us
\begin{equation}
|\nabla u(z') - \nabla u(z) | \leq 4j |z'-z|
\end{equation}
for all $|z'-z| \leq \frac{1}{2j}$. Combining this with \eqref{eq:triviallipbound}, we obtain that $z \mapsto \nabla u 
(z)$ is Lipschitz on $A_j$ with constant $4j$. 
\end{proof}

\begin{proof}[\textit{Proof of Lemma \ref{lem:graduistransportdirection}}]
We need to specify the set where \eqref{eq:transportdirection} holds. We start by defining the set $C$ as all points $x$ such that the pair $(x,T_0(x))$ saturates the Lipschitz inequality of $u_0$,
\begin{equation}
C = \{ x \in \Omega \mid  u_0(x) - u_0(T_0(x)) = |x-T_0(x)|\}. \label{eq:Cdef}
\end{equation}
$C$ has full $\mu$ measure (meaning $\mu(C^c) = 0$) by the following argument: it is a standard consequence of Kantorovich's duality theorem (\eg the discussion following equation (3.2) in \cite{santambrogio2015optimal}) that
\begin{equation}
\spt((Id,T_0)_\# \mu) \subset \{ (x,y) \in \Omega^2 \mid u_0(x) - u_0(y) = |x-y|\}. \label{eq:sptcontainedintrays}
\end{equation}
Letting $\Gamma_{T_0}(C^c) = \{ (x,T_0(x)) \mid x \in C^c\}$, \eqref{eq:sptcontainedintrays} gives us that
\begin{equation}
(Id, T_0)_\# \mu (\Gamma_{T_0}(C^c)) = 0,
\end{equation}
but $(Id, T_0)^{-1}(\Gamma_{T_0}(C^c)) = C^c$, so $\mu(C^c) = 0$. Hence $C$ has full $\mu$ measure. Note also that $D$ has full $\mu$ measure by Rademacher's Theorem.

We claim that for all $x \in C \cap D$, \eqref{eq:transportdirection} holds. Indeed, suppose $x \neq T_0(x)$. Then by definition of $C$, the pair $(x,T_0(x))$ is in a transport ray. By Lemma \ref{lem:affineanddifferentiableoninterior} or Lemma \ref{lem:graduisraydirectionextends}, we obtain that $\nabla u_0(x)$ is aligned with the direction of that ray, and hence by the definition of $C$ again,
\begin{equation}
\nabla u_0(x) = \frac{x-T_0(x)}{|x-T_0(x)|}.
\end{equation}
Thus, \eqref{eq:transportdirection} holds on $C \cap D$, and $\mu((C\cap D)^c) = 0$.
\end{proof}

\begin{proof}[\textit{Proof of Theorem \ref{prop:propertiesofnextmu} parts \ref{item:u0iskantpot} and \ref{item:exactw1value}}]

Both properties will be proved simultaneously by Kantorovich duality. Since $u_0\in \onelip(\Omega)$, Kantorovich duality gives
\begin{align}
W_1(\tilde{\mu},\nu) &\geq \int_\Omega u_0(x) d\tilde{\mu} - \int_\Omega u_0(y) d\nu,\\
&=\int_\Omega u_0(f_0(x)) d\mu - \int_\Omega u_0(y) d\nu,\\
&=\int_{C\cap D} u_0\left(x - \eta_0 \frac{x-T_0(x)}{|x-T_0(x)|} \right)d\mu, \nonumber\\
&\quad- \int_\Omega u_0(y) d\nu.
\end{align}
The last line follows because \eqref{eq:stepsizelessthanminimallength} gives that $x \neq T_0(x)$ $\mu$-almost everywhere, and hence by Lemma \ref{lem:graduistransportdirection},
\begin{equation}
\nabla u_0(x) = \frac{x-T_0(x)}{|x-T_0(x)|}
\end{equation}
on $C\cap D$, which was proven to be a full $\mu$ measure set in the proof of Lemma \ref{lem:graduistransportdirection}. By \eqref{eq:stepsizelessthanminimallength} and Lemma \ref{lem:affineanddifferentiableoninterior} we get that $\mu$-almost everywhere in $C$,
\begin{equation}
u_0\left(x - \eta_0 \frac{x-T_0(x)}{|x-T_0(x)|}\right) = u_0(x) - \eta_0.
\end{equation}
Substituting this into our calculation, we obtain
\begin{align}
W_1(\tilde{\mu},\nu) &\geq \int_{C \cap D} (u_0(x) - \eta_0)d\mu - \int_\Omega u_0 d\nu,\\
&= W_1(\mu, \nu) - \eta_0, \label{eq:W_1lowerbound}
\end{align}
the last equality holding because $u_0$ is a Kantorovich potential for the pair $(\mu,\nu)$.

We will now show that the right hand side of \eqref{eq:W_1lowerbound} is also an upper bound on $W_1(\tilde{\mu},\nu)$. Observe that the plan
\begin{equation}
\gamma:= (f_0, T_0)_\# \mu
\end{equation}
is in $\Pi(\tilde{\mu},\nu)$. Therefore,
\begin{align}
W_1(\tilde{\mu}, \nu) &\leq \int_\Omega |x-y| d(f_0, T_0)_\# \mu, \\
&= \int_\Omega |f_0(x) - T_0(x)| d\mu,\\
&= \int_\Omega |x - T_0(x) - \eta_0 \frac{x-T_0(x)}{|x-T_0(x)|}|d\mu.
\end{align}
In the last line we have again used Lemma \ref{lem:graduistransportdirection}. Using \eqref{eq:stepsizelessthanminimallength} again, we obtain that
\begin{equation}
W_1(\tilde{\mu}, \nu) \leq \int_\Omega |x-T_0(x)| d\mu - \eta_0 = W_1(\mu,\nu)  - \eta_0, \label{eq:W_1upperbound}
\end{equation}
where here we have used optimality of $T_0$. Equality of the lower bound in \eqref{eq:W_1lowerbound} and the upper bound in \eqref{eq:W_1upperbound} mean that the inequalities are actually equalities, proving that $u_0$ is a Kantorovich potential for $W_1(\tilde{\mu}, \nu)$ and that $W_1(\tilde{\mu},\nu) = W_1(\mu,\nu) - \eta_0$.
\end{proof}

We are now ready to prove part \ref{item:densityformula} of Theorem \ref{prop:propertiesofnextmu}. Our proof is just an integration by substitution to calculate the density $\tilde{\rho}$ of $\tilde{\mu}$ given the density $\rho$ of $\mu$, but we must be careful because our change of variables can be irregular near the ends of transport rays. The main idea is to remove $\mu$ negligible sets from $\Omega$ until we can apply the Area Formula to compute $\tilde{\rho}$. 

\begin{proof}[\textit{Proof of Theorem \ref{prop:propertiesofnextmu} part \ref{item:densityformula}}]
Let $E$ be a Lebesgue measurable set. Then with $\rho$ as the density of $\mu$ with respect to $\mathcal{L}_d$,
\begin{align}
\tilde{\mu}(E) &= \int_{f_0^{-1}(E)} \rho(x) dx.
\end{align}
By \eqref{eq:stepsizelessthanminimallength} there exists $\epsilon>0$ such that $\eta_0 + \epsilon < \ell_0(T_0)$. For this $\epsilon$, define
\begin{equation}
F = \{ x \in \Omega \mid |x - T_0(x)| > \eta_0 + \epsilon\}.\label{eq:Fdeff}
\end{equation}
$F$ has full $\mu$ measure by \eqref{eq:stepsizelessthanminimallength}. Set also $B = \ell_+^{-1}(0)$, which is the set of points such that $u_0$ fails to saturate its Lipschitz constant for any amount of movement in the direction $\nabla u_0$. We claim that
\begin{equation}
P:= B^c \cap C \cap F
\end{equation}
has full $\mu$ measure. We are interested in $P$ because for all $x \in P$, $x$ must be in a transport ray which has descending length at least $\eta$ by virtue of being in $C \cap F$, and must not be at the start of that ray by virtue of being in $B^c$. Using Proposition \ref{prop:Lipschitzgradient}, this will be the set where our change of variables is sufficiently regular.

Since $C$ and $F$ both have full $\mu$ measure, to show that $P$ has full measure we need only show that 
\begin{equation}
\mu(B \cap C \cap F) = 0.\label{eq:munnegligble}
\end{equation}
If $x \in C \cap F$, however, $x$ must be in a transport ray of $u_0$, and if in addition $x \in B$, then $x$ must be the upper endpoint of that ray. The set of endpoints of transport rays has Lebesgue measure $0$ by Lemma 25 of \cite{caffarelli2002constructing}, and hence $\mu$ measure $0$ since $\mu\ll \mathcal{L}_d$. Thus \eqref{eq:munnegligble} holds, and hence $P$ has full $\mu$ measure. Thus we obtain
\begin{equation}
\tilde{\mu}(E) = \int_{P \cap f_0^{-1}(E)} \rho(x) dx.
\end{equation}

We can write $P \cap f_0^{-1}(E)$ as the image of $g_0$ on a certain set. In particular
\begin{equation}
P \cap f_0^{-1}(E) = g_0(f_0(P) \cap E). \label{eq:setasimageset}
\end{equation}
To prove \eqref{eq:setasimageset}, note that on $P$ we have $g_0(f_0(x)) = x$. Indeed, for $x \in P$, both $x$ and $f_0(x)$ are on the interior of the transport ray containing $x$ and $T_0(x)$, and hence by Lemma \ref{lem:affineanddifferentiableoninterior}, $\nabla u_0(x)$ and $\nabla u_0(f_0(x))$ exist and are equal,
which implies $g_0(f_0(x)) = x$. So,
\begin{equation}
P \cap f_0^{-1}(E) = g_0(f_0(P \cap f_0^{-1}(E))), \label{eq:glambflamb=I}
\end{equation}
from which \eqref{eq:setasimageset} follows.

As a consequence of the Area Formula (Theorem 3.2.5. from \cite{federer2014geometric}), if $\rho:\RR^d \rightarrow \RR$ is Lebesgue measurable, $A \subset \RR^d$ is Lebesgue measurable, and $g: A \rightarrow \RR^d$ injective and Lipschitz on $A$, we have
\begin{equation}
\int_{g(A)} \rho(y) dy = \int_A \rho(g(x))|Dg(x)| dx,
\end{equation}
where $|Dg(x)|$ denotes the absolute value of the determinant of the Jacobian of $g$. Thus, provided $f_0(P) \cap E$ is measurable and $g_0$ is injective and Lipschitz on this set, the area formula gives
\begin{equation}
\int_{g_0(f_0(P) \cap E)} \rho(x) dx = \int_{f_0(P) \cap E} \rho(g_0(x))|Dg_0(x)| dx.
\end{equation}
Assuming we have this formula, we obtain
\begin{equation}
\tilde{\mu}(E) = \int_E 1_{f_0(P)}(x)\rho(g_0(x))|D g_0(x)| dx.
\end{equation}
Hence, $\tilde{\mu} \ll \mathcal{L}_d$ with integrable density
\begin{equation}
\tilde{\rho}(x) = 1_{f_0(P)}(x)\rho(g_0(x))|D g_0(x)|.
\end{equation}
Evaluating this formula at $f_0(x)$ for $x\in P$, we get \eqref{eq:densityformula}, and part \ref{item:densityformula} of Theorem \ref{prop:propertiesofnextmu} is proved. Hence, it only remains to prove that $f_0(P) \cap E$ is measurable, and that $g_0$ is injective and Lipschitz on this set. These facts are proved in Lemmas \ref{lem:imagesetismeasurable} and \ref{lem:g_lambdainjectiveandlipschitz}.
\end{proof}
\begin{lemma}
The set $f_0(P)$ is measurable. \label{lem:imagesetismeasurable}
\end{lemma}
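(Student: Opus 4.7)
My plan is to cover $P$ by countably many subsets on which $f_0$ is Lipschitz, and then invoke the two standard facts that Lipschitz maps on $\RR^d$ send $\sigma$-compact sets to $\sigma$-compact sets and send Lebesgue null sets to Lebesgue null sets. Combined with inner regularity of Lebesgue measure, this will express $f_0(P)$ as a countable union of Borel sets plus a Lebesgue null set, hence Lebesgue measurable.

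Concretely, I would first show the inclusion $P \cap D \subset \bigcup_{j=1}^\infty A_j$, where $D$ is the differentiability set of $u_0$ and $A_j$ is the set from Proposition \ref{prop:Lipschitzgradient}. For $x \in P$, membership in $B^c$ gives $\ell_+(x)>0$ directly, and membership in $C \cap F$ places $x$ on a transport ray of length exceeding $\eta_0 + \epsilon$ in the direction $-\nabla u_0(x)$ (via Lemma \ref{lem:graduistransportdirection} together with \eqref{eq:uisaffineonrays}), so $\ell_-(x) > \eta_0 + \epsilon$. Since $\mathcal{L}_d(D^c) = 0$ by Rademacher's theorem and $f_0$ is the identity off $D$, the image $f_0(P \cap D^c)$ is contained in $D^c$ and is Lebesgue null; it therefore suffices to show $f_0(P \cap A_j)$ is Lebesgue measurable for each $j$. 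By Proposition \ref{prop:Lipschitzgradient}, $f_0|_{A_j}$ is Lipschitz with constant $1 + 4j\eta_0$, and extends to a Lipschitz map $\tilde{f}_j : \RR^d \to \RR^d$ by Kirszbraun's theorem. Fixing Borel representatives of $\nabla u_0$ on $D$ and of $T_0$ on $C \cap D$ makes $A_j$, $B$, $C$, and $F$ Borel, so $P \cap A_j$ is Borel. By inner regularity I decompose $P \cap A_j = K_j \cup N_j$ with $K_j$ $\sigma$-compact and $N_j$ Lebesgue null; then $f_0(P \cap A_j) = \tilde{f}_j(K_j) \cup \tilde{f}_j(N_j)$ is a $\sigma$-compact set plus a Lebesgue null set, hence Lebesgue measurable. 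Taking the countable union over $j$ and adjoining the null set $f_0(P \cap D^c)$ yields the result.

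The main obstacle I anticipate is the careful bookkeeping of Borel versus merely Lebesgue measurability of the auxiliary sets, since $T_0$ is canonically defined only $\mu$-a.e. and $\ell_\pm$ only on $D$. This can be sidestepped cleanly by fixing Borel representatives of $\nabla u_0$ and $T_0$ at the outset, after which no genuine analytical content remains beyond the Kirszbraun extension and the classical fact that Lipschitz maps send null sets to null sets.
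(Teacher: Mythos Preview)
Your proposal is correct and follows the same overall strategy as the paper: cover $P$ by countably many pieces on which $f_0$ is Lipschitz, then use that Lipschitz images of Lebesgue measurable sets are Lebesgue measurable. The implementation details differ in a few instructive ways.

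First, you decompose via the sets $A_j$ directly (controlling both $\ell_+$ and $\ell_-$), whereas the paper decomposes via $B_j := \ell_+^{-1}([1/j,\infty))$, using separately that $\ell_- > \eta_0$ on all of $P$ from $C\cap F$; both routes end up inside some $A_k$ before invoking Proposition~\ref{prop:Lipschitzgradient}. Second, for the ``Lipschitz image of a measurable set is measurable'' step, you give a self-contained argument via a Kirszbraun extension together with inner regularity (measurable $=$ $\sigma$-compact $\cup$ null, and Lipschitz maps preserve both), while the paper simply cites Proposition~262E of Fremlin and additionally records injectivity of $f_0$ on $P$ (which is not needed for your version of the argument). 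Third, for measurability of the pieces the paper proves a separate lemma that $\ell_+$ is upper semi-continuous on $D$, whereas you sidestep this by fixing Borel representatives of $\nabla u_0$ and $T_0$; both are fine, though the upper semi-continuity route avoids the bookkeeping about representatives. Finally, you are explicit about the set $P\cap D^c$, noting that $f_0$ is the identity there and $D^c$ is Lebesgue null; the paper's argument tacitly reads $B^c$ as the complement within $D$, so your care here is a small improvement in rigor.
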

\begin{proof}
By definition of $B$ we have $\ell_+ >0$ on $P$. So, defining
\begin{align}
B_j &= \ell_+^{-1}\left(\left[\frac{1}{j}, \infty\right)\right), \quad j \geq 1,
\end{align} 
we obtain that $P = \bigcup_{j=1}^\infty P \cap B_{j}$. Hence,
\begin{equation}
f_0(P) = \bigcup_{j=1}^\infty f_0(P \cap B_j).
\end{equation}
For all $j\geq 1$, $P \cap B_j = C \cap F \cap B_j$ is measurable because $C$ and $F$ are measurable and $\ell_+$ is an upper semi-continuous function on its domain (see Lemma \ref{lem:ell+uppersemicont}). Hence, if $f_0$ is injective and Lipschitz on $P \cap B_j$, then Proposition 262E of \cite{fremlin2000measure} will give us that $f_0(P \cap B_j)$ is measurable for all $j$, proving the lemma.

In fact, $f_0$ is obviously injective on $P$ since $g_0(f_0(x)) = x$ for all $x \in P$. That $f_0$ is Lipschitz on $P \cap B_j$ follows from Proposition \ref{prop:Lipschitzgradient}. Indeed, by definition of $P$, $\ell_-(x) > \eta_0$ on $P$, and by definition of $B_j$, $\ell_+(x) \geq 1/j$. Hence, there exists $k$ such that $P \cap B_j \subset A_k$, and thus $f_0$ is Lipschitz on $P \cap B_j$.
\end{proof}
\begin{lemma}
The function $\ell_+$ is upper semi-continuous on its domain.\label{lem:ell+uppersemicont}
\end{lemma}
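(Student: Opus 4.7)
The plan is to prove the sequential form of upper semi-continuity: fix any sequence $(z_n) \subset D$ converging to some $z_0 \in D$, set $L := \limsup_n \ell_+(z_n)$, and show $L \leq \ell_+(z_0)$. The case $L = 0$ is trivial, so I assume $L > 0$ and, passing to a subsequence, that $\ell_+(z_n) \to L$.

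As a preliminary observation, I would first verify that for each $z \in D$ the set $\{t \geq 0 \mid u(z + t \nabla u(z)) - u(z) = t\}$ is actually a closed interval $[0, \ell_+(z)]$, so the supremum is attained. This follows by adding the 1-Lipschitz estimates on $[z, z + s\nabla u(z)]$ and $[z + s\nabla u(z), z + t\nabla u(z)]$ for $0 < s < t$, which forces both to be equalities whenever the total is saturated (and also forces $|\nabla u(z)| = 1$ whenever $\ell_+(z) > 0$).

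Next, since $|\nabla u(z_n)| \leq 1$ for all $n$, compactness lets me pass to a further subsequence on which $\nabla u(z_n) \to v$ for some $v \in \RR^d$ with $|v| \leq 1$. For each $\epsilon \in (0, L)$ and $n$ large enough, the preliminary observation gives $u(z_n + (L-\epsilon)\nabla u(z_n)) - u(z_n) = L - \epsilon$; passing to the limit using continuity of $u$ and closedness of $\Omega$ yields
\[
u(z_0 + (L-\epsilon) v) - u(z_0) = L - \epsilon,
\]
which in turn forces $|v| = 1$. Hence the segment $[z_0, z_0 + (L-\epsilon)v]$ saturates the Lipschitz inequality of $u$, so $z_0$ sits in a transport ray pointing in direction $v$.

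The main obstacle is that $\nabla u$ is not continuous on $D$ in general, so I cannot simply pass to the limit in the defining equation for $\ell_+(z_n)$; the subsequential limit $v$ extracted by compactness must still be identified with $\nabla u(z_0)$. This identification is where Lemmas~\ref{lem:affineanddifferentiableoninterior} and~\ref{lem:graduisraydirectionextends} do the crucial work: since $z_0 \in D$ and lies in a transport ray of direction $v$, one of these lemmas applies (depending on whether $z_0$ is interior to or an endpoint of the ray) to give $\nabla u(z_0) = v$. With this identification, the displayed equation becomes $u(z_0 + (L-\epsilon)\nabla u(z_0)) - u(z_0) = L - \epsilon$, so $\ell_+(z_0) \geq L - \epsilon$; letting $\epsilon \to 0^+$ completes the proof.
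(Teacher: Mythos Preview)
Your proof is correct and follows essentially the same route as the paper's: extract a subsequence along which $\ell_+(z_n)$ converges and $\nabla u(z_n) \to v$ by compactness, pass to the limit using continuity of $u$ to get a saturated segment through $z_0$ in direction $v$, then invoke Lemmas~\ref{lem:affineanddifferentiableoninterior} and~\ref{lem:graduisraydirectionextends} to identify $v = \nabla u(z_0)$. The paper works directly at $t = \ell_+(x_n)$ rather than at a fixed $t = L - \epsilon$ (and asserts $|v|=1$ upfront, since $\ell_+(x_n)>0$ forces $|\nabla u(x_n)|=1$), but these are cosmetic differences; your added preliminary observation that the defining set is the closed interval $[0,\ell_+(z)]$ just makes explicit what the paper uses tacitly.
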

\begin{proof}
The proof is similar to that of Lemma 24 in \cite{caffarelli2002constructing}, but the setting is different, so we provide the details here. Let $(x_n)_{n=1}^\infty \subset D$ converge to a point $x \in D$. If
\begin{equation}
\limsup_{n} \ell_+(x_n) = 0,
\end{equation}
then $\ell_+(x) \geq \limsup_n \ell_+(x_n)$ trivially. If $m_0:=\limsup_n \ell_+(x_n) >0$, we can extract a subsequence and relabel so that $m_0=\lim_n \ell_+(x_n) >0$. Since $(\nabla u_0(x_n))_n$ is a bounded sequence, we may extract a further subsequence and relabel again so that $\lim_{n\rightarrow \infty} \nabla u_0(x_n) = v$ for some unit vector $v$. The continuity of $u_0$ then implies
\begin{align}
u_0(x + m_0 v) - u_0(x) &= \lim_{n\rightarrow \infty} u_0(x_n + \ell_+(x_n) \nabla u_0(x_n)) \nonumber\\
&\qquad - u_0(x_n),\\
&= \lim_{n \rightarrow \infty} \ell_+(x_n),\\
&= m_0.\label{eq:ell+xisatleastell0}
\end{align}
Since $x \in D$ and $u_0\in\onelip(\Omega)$ Lemma \ref{lem:affineanddifferentiableoninterior} or Lemma \ref{lem:graduisraydirectionextends} implies that we must have $\nabla u_0(x) = v$. As such, \eqref{eq:ell+xisatleastell0} implies $\ell_+(x) \geq m_0$, proving upper semi-continuity. 
\end{proof}

\begin{lemma}
The function $g_0$ is injective and Lipschitz on $f_0(P)$. \label{lem:g_lambdainjectiveandlipschitz}
\end{lemma}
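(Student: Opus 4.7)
The plan is to first handle injectivity, which follows almost for free from the identity $g_0\circ f_0 = \mathrm{id}$ on $P$ already established during the proof of part \ref{item:densityformula}, and then to establish the Lipschitz property by showing that $f_0(P)$ is contained in one of the sets $A_k$ from Proposition \ref{prop:Lipschitzgradient}, on which $\nabla u_0$ is already known to be Lipschitz.

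For injectivity: suppose $y_1, y_2 \in f_0(P)$ with $g_0(y_1)=g_0(y_2)$ and write $y_i=f_0(x_i)$ for some $x_i\in P$. Since $g_0(f_0(x))=x$ for every $x\in P$, we get $x_1=x_2$ and hence $y_1=f_0(x_1)=f_0(x_2)=y_2$. This takes a single line once the $g_0\circ f_0=\mathrm{id}$ identity is in hand.

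For the Lipschitz bound I would show there is a $k\in\mathbb{N}$ such that $f_0(P)\subset A_k$. Fix $x\in P$ and let $\delta=|x-T_0(x)|$. Because $x\in C\cap F$, the pair $(x,T_0(x))$ lies on a transport ray of $u_0$ with $\delta>\eta_0+\epsilon$, and by \Cref{lem:graduistransportdirection} the direction $-\nabla u_0(x)$ points from $x$ to $T_0(x)$. Moving from $x$ in that direction saturates the Lipschitz inequality up to distance $\delta$, so $\ell_-(x)\geq \delta >\eta_0+\epsilon$. Since $x\in B^c$ we also have $\ell_+(x)>0$. Setting $y=f_0(x)$, the key step is the transfer of these bounds to $y$: using that $y$ lies on the interior of the same transport ray as $x$ (so $\nabla u_0(y)=\nabla u_0(x)$ exists), a direct computation shows
\begin{equation}
\ell_-(y)\geq \ell_-(x)-\eta_0 > \epsilon, \qquad \ell_+(y) \geq \ell_+(x)+\eta_0 > \eta_0,
\end{equation}
where the second inequality uses that moving from $y$ in the $+\nabla u_0(y)$ direction first returns to $x$ (distance $\eta_0$, Lipschitz saturation clear) and then continues along the original ray for an additional $\ell_+(x)$. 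Choosing $k$ with $1/k<\min(\epsilon,\eta_0)$ then gives $y\in A_k$, so $f_0(P)\subset A_k$. Proposition \ref{prop:Lipschitzgradient} now yields that $\nabla u_0$ is $4k$-Lipschitz on $A_k$, and hence $g_0(y)=y+\eta_0\nabla u_0(y)$ is $(1+4k\eta_0)$-Lipschitz on $f_0(P)$.

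The only step that requires genuine care, and which I expect to be the main obstacle, is the bookkeeping for $\ell_\pm(y)$ in terms of $\ell_\pm(x)$: one must verify the Lipschitz-saturation identity $u_0(y+t\nabla u_0(y))-u_0(y)=t$ on both the subinterval $t\in[0,\eta_0]$ (where the argument returns from $y$ toward $x$) and the subinterval $t\in[\eta_0,\eta_0+\ell_+(x)]$ (where it proceeds past $x$), piecing these together via the affine behaviour of $u_0$ on transport rays from \Cref{lem:affineanddifferentiableoninterior}. Once those two affine segments are joined, the containment $f_0(P)\subset A_k$ is immediate and the remainder of the argument is mechanical.
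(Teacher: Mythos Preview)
Your proposal is correct and follows essentially the same route as the paper: injectivity via the identity $g_0\circ f_0=\mathrm{id}$ on $P$, and the Lipschitz property via the containment $f_0(P)\subset A_k$ together with Proposition~\ref{prop:Lipschitzgradient}. The paper simply asserts $\inf_{x\in P}\min(\ell_-(f_0(x)),\ell_+(f_0(x)))>0$ without spelling out the bounds, whereas you make the explicit estimates $\ell_-(y)>\epsilon$ and $\ell_+(y)>\eta_0$; this extra bookkeeping is sound and is exactly what underlies the paper's one-line claim.
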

\begin{proof}
By definition of $f_0$ and $P$ we obtain that
\begin{equation}
\inf_{x \in P} \min(\ell_-(f_0(x)), \ell_+(f_0(x))) >0.
\end{equation}
Thus, there exists $j$ such that $f_0(P) \subset A_j$, and hence by Lemma \ref{prop:Lipschitzgradient} $g_0$ is Lipschitz on $f_0(P)$.

To see that $g_0$ is injective on $f_0(P)$, let $x_1, x_2 \in P$ such that
\begin{equation}
g_0(f_0(x_1)) = g_0(f_0(x_2)).
\end{equation}  
Since $g_0(f_0(x)) = x$ for $x \in P$, we immediately obtain $f_0(x_1) = f_0(x_2)$, and hence $g_0$ is injective on $f_0(P)$. 
\end{proof}
Part \ref{item:optimalmap}  of Theorem \ref{prop:propertiesofnextmu} now follows easily.
\begin{proof}[\textit{Proof of Theorem \ref{prop:propertiesofnextmu} part \ref{item:optimalmap}}]
Since $g_0 \circ f_0 = I$ on $P$ and $\mu(P^c) = 0$, we obtain that
\begin{align}
(T_{1})_\# \tilde{\mu} &= (T_0 \circ g_0)_\# (f_0)_\# \mu,\\
&= (T_0)_\# \mu,\\
&= \nu.
\end{align}
Hence $T_{1}$ is an admissible map for computing $W_1(\tilde{\mu},\nu)$. Its cost is
\begin{align}
\int_\Omega |x-T_{1}(x)| d\tilde{\mu} &= \int_\Omega |f_0(x) - T_0(x)| d\mu,\\
&= \int_\Omega |x- T_0(x) - \eta_0 \frac{x-T_0(x)}{|x-T_0(x)|}|d\mu.
\end{align}
Applying \eqref{eq:stepsizelessthanminimallength} we get
\begin{equation}
\int_\Omega |x-T_{1}(x)| d\tilde{\mu}= \int_\Omega |x-T_0(x)|d\mu - \eta_0 = W_1(\tilde{\mu}, \nu), 
\end{equation}
and hence $T_{1}$ is optimal. In the course of proving this we have shown that $\mu$ almost everywhere
\begin{equation}
|f_0(x)-T_{1}(f_0(x))| = |x-T_0(x)| - \eta_0.
\end{equation}
As such,
\begin{equation}
\mu(\{ x \mid |f_0(x) - T_{1}(f_0(x))| < \ell_0(T_0) - \eta_0\}) = 0.
\end{equation}
Hence
\begin{equation}
\tilde{\mu}(\{ x \mid |x - T_{1}(x)| < \ell_0(T_0) - \eta_0\}) = 0,
\end{equation}
which implies $\ell_0(T_{1}) \geq \ell_0(T_0) - \eta_0$.
\end{proof}
Before we proceed with the proof of Theorem \ref{prop:approximateconvergence}, we present the following lemma, which shows that provided $\mu$ has a density and the minimal transport distance is positive, the gradients of Kantorovich potentials for the pair $(\mu,\nu)$ agree $\mu$-almost everywhere. Thus, the measure $\mu_n$, which in principle depends on a choice of Kantorovich potential for the pair $(\mu_{n-1},\nu)$, is actually independent of this choice provided these conditions hold. 
\begin{lemma}
Suppose that $\mu \ll \mathcal{L}_d$, $T_0$ is an optimal transport map for the pair $(\mu,\nu)$, and $u_0, \tilde{u}_0$ are Kantorovich potentials for the same pair. If $\ell_0(T_0) >0$, then $\mu$-almost everywhere \label{lem:equalityofgrads}
\begin{equation}
    \nabla u_0(x) = \nabla \tilde{u}_0(x).\label{eq:equalityofgrads}
\end{equation}
\end{lemma}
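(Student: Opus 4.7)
The plan is to reduce the equality of gradients to a direct application of \Cref{lem:graduistransportdirection} applied separately to $u_0$ and to $\tilde u_0$, exploiting that the optimal map $T_0$ is shared and that the minimal transport length is strictly positive.

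First, I would observe that the hypothesis $\ell_0(T_0) > 0$ implies
\begin{equation}
\mu\bigl(\{x \in \Omega \mid |x-T_0(x)| < \ell_0(T_0)\}\bigr) = 0,
\end{equation}
so in particular $T_0(x) \neq x$ for $\mu$-almost every $x$. Next, since $u_0$ and $\tilde u_0$ are both $1$-Lipschitz on $\Omega$, Rademacher's theorem guarantees that each is differentiable $\mathcal{L}_d$-a.e., and since $\mu \ll \mathcal{L}_d$ both are differentiable $\mu$-a.e. Let $N$ be the $\mu$-null set off which (i) $T_0(x) \neq x$, (ii) $u_0$ is differentiable at $x$, and (iii) $\tilde u_0$ is differentiable at $x$ all hold simultaneously.

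Now, \Cref{lem:graduistransportdirection} applied with the Kantorovich potential $u_0$ and the optimal map $T_0$ yields, $\mu$-a.e.,
\begin{equation}
-\nabla u_0(x) = \frac{T_0(x) - x}{|T_0(x)-x|}.
\end{equation}
Applying the same lemma with $\tilde u_0$ in place of $u_0$ (the lemma's hypotheses hold equally, since $T_0$ is optimal for $(\mu,\nu)$ regardless of which Kantorovich potential one chooses), we obtain $\mu$-a.e.
\begin{equation}
-\nabla \tilde u_0(x) = \frac{T_0(x) - x}{|T_0(x)-x|}.
\end{equation}
Taking the intersection of the full-measure sets on which these two identities and the preceding conditions on $N$ all hold produces a full $\mu$-measure set on which $\nabla u_0(x) = \nabla \tilde u_0(x)$, proving \eqref{eq:equalityofgrads}.

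The only non-routine point is justifying that \Cref{lem:graduistransportdirection} may be applied to $\tilde u_0$ with the same $T_0$; this is immediate because the lemma takes as input any Kantorovich potential together with any optimal transport map for the pair, and the conclusion depends only on the relationship between the two saturating the Lipschitz inequality along transport rays emanating in the direction $T_0(x) - x$. The positivity assumption $\ell_0(T_0) > 0$ is what guarantees we are genuinely on the interior of these rays $\mu$-a.e., so that the gradient direction is unambiguously determined by the ray direction $\pm(T_0(x) - x)/|T_0(x) - x|$ for both potentials.
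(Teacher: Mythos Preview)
Your proof is correct and follows essentially the same approach as the paper: use $\ell_0(T_0)>0$ to ensure $x\neq T_0(x)$ for $\mu$-a.e.\ $x$, then apply \Cref{lem:graduistransportdirection} separately to $u_0$ and $\tilde u_0$ with the common optimal map $T_0$ to conclude both gradients equal $(x-T_0(x))/|x-T_0(x)|$. The paper's version is terser but identical in substance.
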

\begin{proof}
Since $\ell_0(T_0) >0$, we have that $x \neq T_0(x)$ for $\mu$ almost all $x$. Applying Lemma \ref{lem:graduistransportdirection} separately to $u_0$ and $\tilde{u}_0$, we obtain \eqref{eq:equalityofgrads} $\mu$-almost everywhere.
\end{proof}
\begin{proof}[Proof of Theorem \ref{prop:approximateconvergence}]
Let $u_0$ be a Kantorovich potential for the pair $(\mu_0,\nu)$. We begin by claiming that for all $n \in \{ 0, 1, \ldots, N(\theta))$ we have
\begin{enumerate}[i.]
\item $u_0$ is a Kantorovich potential for the pair $(\mu_n, \nu)$,
\item $W_1(\mu_n, \nu) = (1-\theta)^n W_1(\mu_0, \nu)$,
\item $\mu_n \ll \mathcal{L}_d$, and,
\item for $n>0$ and setting $T_n:= T_{n-1} \circ g_{n-1}$ with
\begin{equation}
g_{n-1}(x) = \begin{cases}
x + \eta_{n-1} \nabla u_0(x) &\quad {\nabla u_0(x) \text{ exists}},\\
x &\quad \text{else,}
\end{cases} 
\end{equation}
$T_n$ is an optimal transport map from $\mu_n$ to $\nu$ satisfying
\begin{equation}
\ell_0(T_n) \geq \ell_0(T_0)- W_1(\mu_0,\nu)(1-(1-\theta)^{n}).
\end{equation}
\end{enumerate}
Points (i - iv) will be proven simultaneously by induction. The base case is true by definition. For the inductive step, suppose that (i - iv) hold for index $n-1$. We will then check that the conditions of Theorem \ref{prop:propertiesofnextmu} hold with $\mu_{n-1}$ as $\mu$, $\mu_n$ as $\tilde{\mu}$ and $T_{n-1}$ as the optimal map from $\mu_{n-1}$ to $\nu$. Note that by definition of $N(\theta)$, our inductive assumption guarantees that $\ell_0(T_{n-1}) >0$, and hence via Lemma \ref{lem:equalityofgrads} we have that $\mu_n = (I-\eta_{n-1} \nabla u_0)_\# \mu_{n-1}$. 

Therefore, the only condition in Theorem \ref{prop:propertiesofnextmu} which must be checked is that 
\begin{equation}
\eta_{n-1} < \ell_0(T_{n-1}). \label{eq:conditionneeded}
\end{equation}
But, by our inductive assumption we have that 
\begin{align}
\ell_0(T_{n-1}) &\geq \ell_0(T_0) -  W_1(\mu_0, \nu)(1-(1-\theta)^{n-1}).
\end{align}
Hence, the inequality \eqref{eq:conditionneeded} will hold if
\begin{align}
\ell_0(T_0) - W_1(\mu_0, \nu)(1-(1-\theta)^{n-1}) &> \eta_{n-1}.
\end{align}
Writing $\eta_{n-1} = \theta(1-\theta)^{n-1}W_1(\mu_0,\nu)$ and rearranging, this is equivalent to
\begin{equation}
(1-\theta)^n > 1 - \frac{\ell_0(T_0)}{W_1(\mu_0,\nu)}.
\end{equation}
Taking logarithms in base $1-\theta$ we see that $n$ satisfies this inequality if and only if
\begin{equation}
n < \log_{1-\theta}\left(1 - \ell_0(T_0)/W_1(\mu_0,\nu)\right).\label{eq:ifholds,prop1applicable}
\end{equation}
Hence, if
\begin{equation}
n \leq \lceil \log_{1-\theta}\left(1 - \ell_0(T_0)/W_1(\mu_0,\nu)\right) \rceil -1 = N(\theta),
\end{equation}
we have \eqref{eq:conditionneeded}. Thus we may apply Theorem \ref{prop:propertiesofnextmu}, from which points i and iii are immediate. From \ref{item:exactw1value} in Theorem \ref{prop:propertiesofnextmu} we obtain
\begin{align}
W_1(\mu_n,\nu) &= W_1(\mu_{n-1}, \nu) - \theta W_1(\mu_{n-1}\nu),\\
&= (1-\theta)^n W_1(\mu_0, \nu),
\end{align}
which proves ii.
Finally, iv follows from part \ref{item:optimalmap} from Theorem \ref{prop:propertiesofnextmu}, together with the computation
\begin{align}
\ell_0(T_n) &\geq \ell_0(T_{n-1}) - \eta_{n-1},\\
&\geq \ell_0(T_0) - W_1(\mu_0,\nu)(1 - (1-\theta)^{n-1}) \nonumber \\
&\quad - \theta(1-\theta)^{n-1}W_1(\mu_0, \nu),\\
&=\ell_0(T_0) - W_1(\mu_0,\nu)(1-(1-\theta)^n).
\end{align}
Thus we have proven claims i-iv. In particular \eqref{eq:W1estimate} is proved, as it is claim ii. Finally, since
\begin{align}
W_1(\mu_{N(\theta)},\nu) &= (1-\theta)^{N(\theta)}W_1(\mu_0,\nu),\\
&= \frac{W_1(\mu_0, \nu)}{1-\theta} (1-\theta)^{N(\theta) +1},
\end{align}
and
\begin{equation}
N(\theta) + 1 \geq \log_{1-\theta}\left(1 - \ell_0(T_0)/W_1(\mu_0,\nu)\right), \label{eq:N(theta)bound}
\end{equation}
we obtain that
\begin{equation}
W_1(\mu_{N(\theta)},\nu) \leq \frac{W_1(\mu_0,\nu) - \ell_0(T_0)}{1-\theta},
\end{equation}
as claimed.
\end{proof}
\begin{proof}[Proof of Proposition \ref{prop:advregis1ttc}]
Since $u_0 \in \onelip(\Omega)$, the minimal value of \eqref{prob:advreg} is bounded below by
\begin{equation}
    \min_{x\in \RR^d} \frac{1}{2}|x-x_0|^2 - \eta|x-x_0| + \eta u_0(x_0) = \eta u_0(x_0) - \frac{1}{2}\eta^2.
\end{equation}
The equality above follows by minimizing the one-dimensional function $z \mapsto \frac{1}{2}z^2 - \eta z$ over non-negative $z$, which has minimizer $z = \eta$. By assumption, for $\mu$ almost all $x_0$ we have $|x_0-T_0(x_0)| \geq \eta$, and the segment $[x_0, T_0(x_0)]$ is contained in a transport ray of $u_0$. In addition, for $\mu$ almost all $x_0$ Lemma \ref{lem:graduistransportdirection} gives us that $|\nabla u_0(x_0)| = 1$. Thus, by Lemma \ref{lem:affineanddifferentiableoninterior}, we get
\begin{align}
    \frac{1}{2}|\eta \nabla u_0(x_0)|^2 + \eta u_0(x_0 - \eta \nabla u_0(x_0)) &= \eta u_0(x_0) - \frac{1}{2}\eta^2,
\end{align}
and so $x_0 - \eta \nabla u_0(x_0)$ obtains the minimal value of \eqref{prob:advreg}. For uniqueness, observe that any minimizer $x^*$ distinct from $x_0 - \eta \nabla u_0(x_0)$ must satisfy
\begin{equation}
    u_0(x^*) = u_0(x_0) - |x^*-x_0|,
\end{equation}
and thus $x_0$ must exist at the intersection of at least two transport rays. The set of $x_0$ for which this can occur is negligible, completing the proof.  
\end{proof}

\section{Additional experimental results}
\label{sec:exp_results_appendix}

\subsection{Misalignment experiments}
\label{sec:misalignment_appendix}

\begin{table}[h]
\centering
\begin{tabular}{ccc}
\toprule
Stage in training & InfoGAN & DCGAN  \\
\midrule
Early & $0.307\pm0.0532$ & $0.284\pm0.030$  \\
 & \small{($410.4\pm9.0$)} & \small{($414.3\pm26.8$)} \\
\midrule
Mid & $0.148\pm0.083$ & $0.081\pm0.082$\\
 & \small{($22.9\pm0.5$)} & \small{($40.8\pm7.1$)} \\
\midrule
Late & $0.107\pm0.075$ & $0.079\pm0.084$\\
 & \small{($21.8\pm0.4$)} & \small{($31.1\pm1.8$)} \\
\bottomrule
\end{tabular}
\caption{Misalignment cosine measurements. All results are reported in the form mean $\pm$ standard deviation. The top lines show the statistics of the cosines of the angles between \eqref{eq:actualupdatedirection} and $-\nabla u_0(G_w(z))$, computed over the training of five separate initializations of each model, each with 256 randomly sampled noise inputs $z$. The statistics appearing in parentheses are for FID values computed over the five training runs. The values were computed at three different stages in training, the same as in Table \ref{table:ndp1}}
\label{table:ndp2}
\end{table}

\begin{figure*}[b]
\centering

\begin{subfigure}[t]{.49\textwidth}
  \centering
  \includegraphics[width=0.85\linewidth]{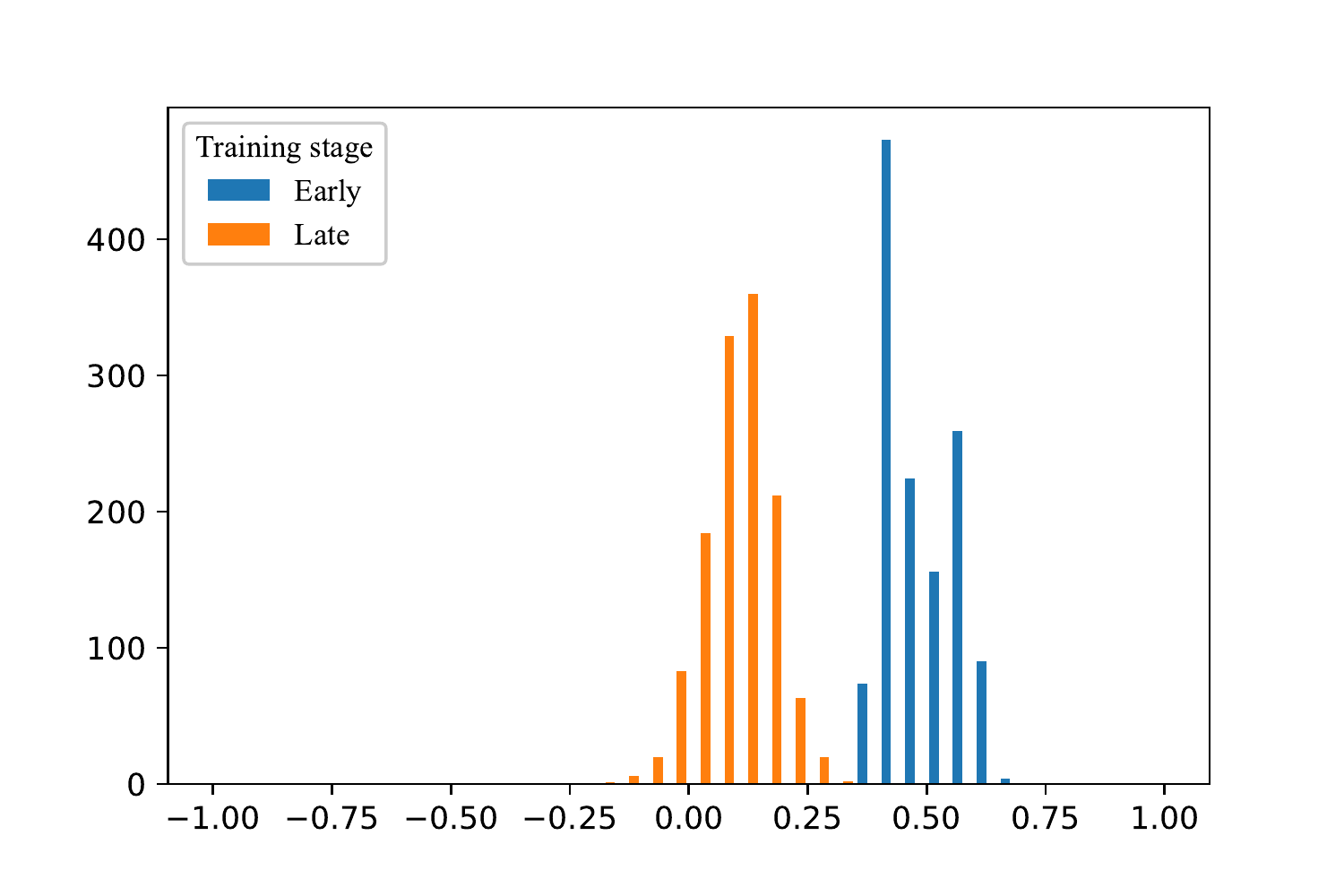}
  \caption{SGD updates vs optimal direction for InfoGAN}
\end{subfigure}
\begin{subfigure}[t]{.49\textwidth}
  \centering
  \includegraphics[width=0.85\linewidth]{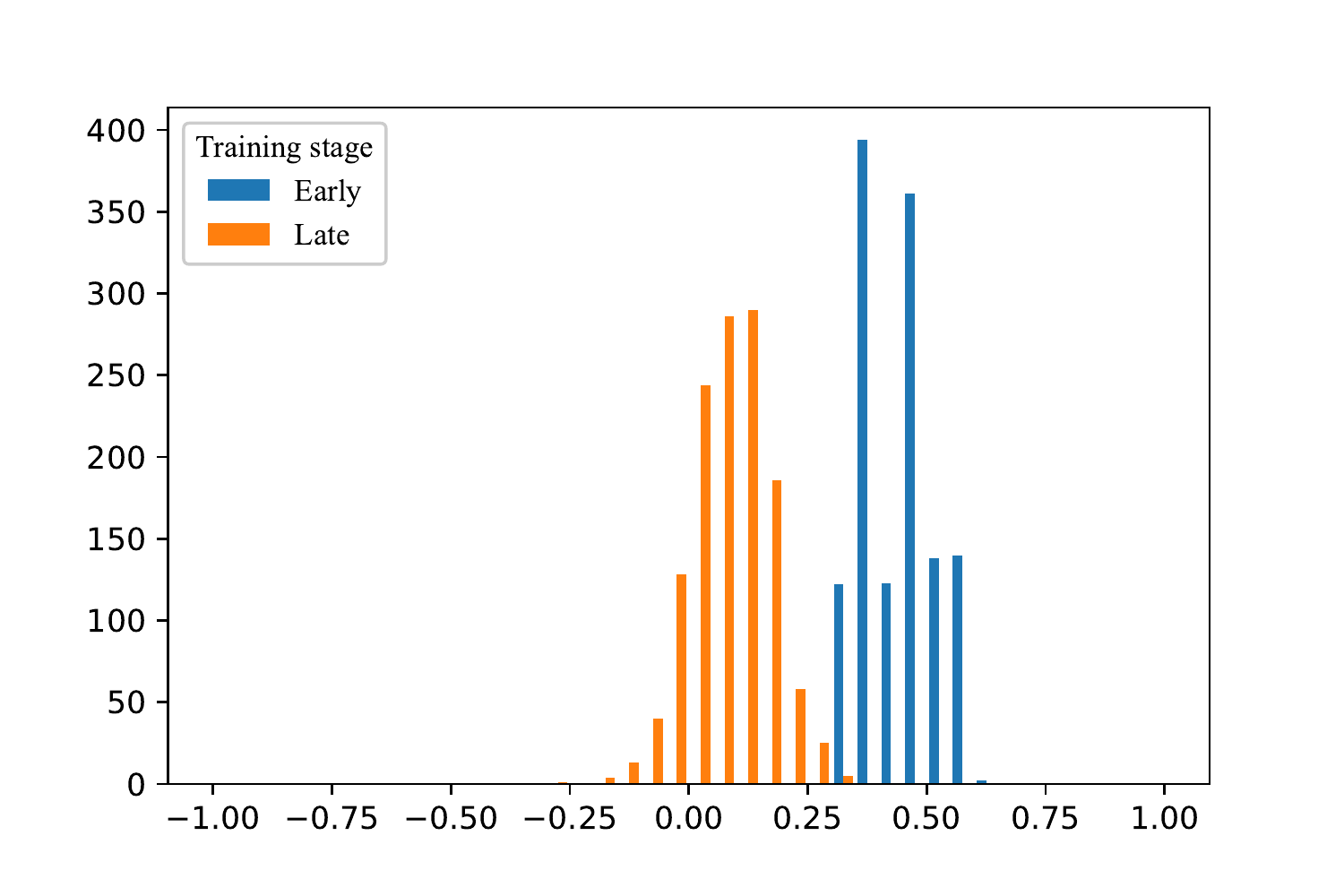}
  \caption{SGD updates vs optimal direction for DCGAN}
\end{subfigure}
\begin{subfigure}[t]{.49\textwidth}
  \centering
  \includegraphics[width=0.85\linewidth]{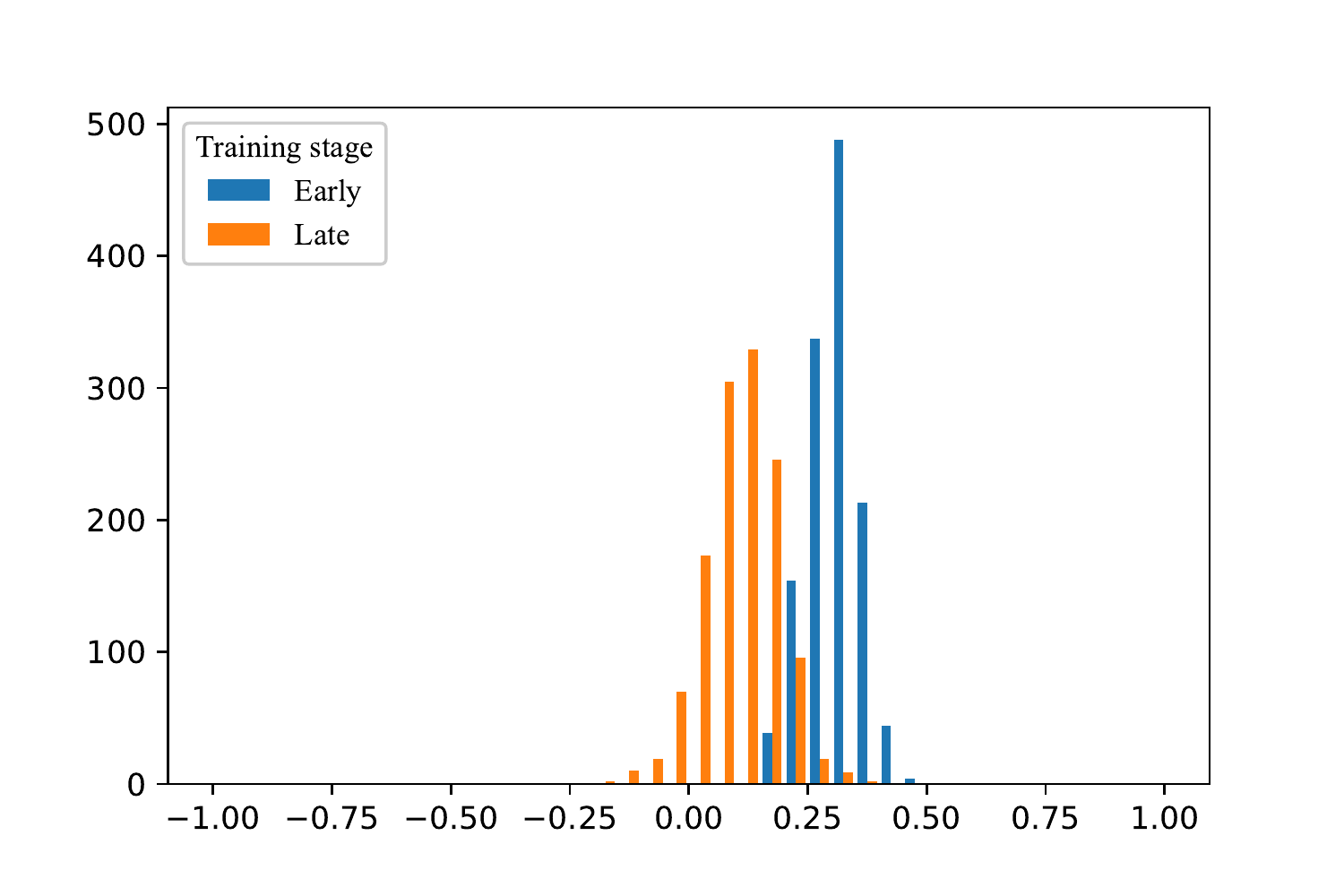}
  \caption{Adam updates vs optimal direction for InfoGAN}
\end{subfigure}
\begin{subfigure}[t]{.49\textwidth}
  \centering
  \includegraphics[width=0.85\linewidth]{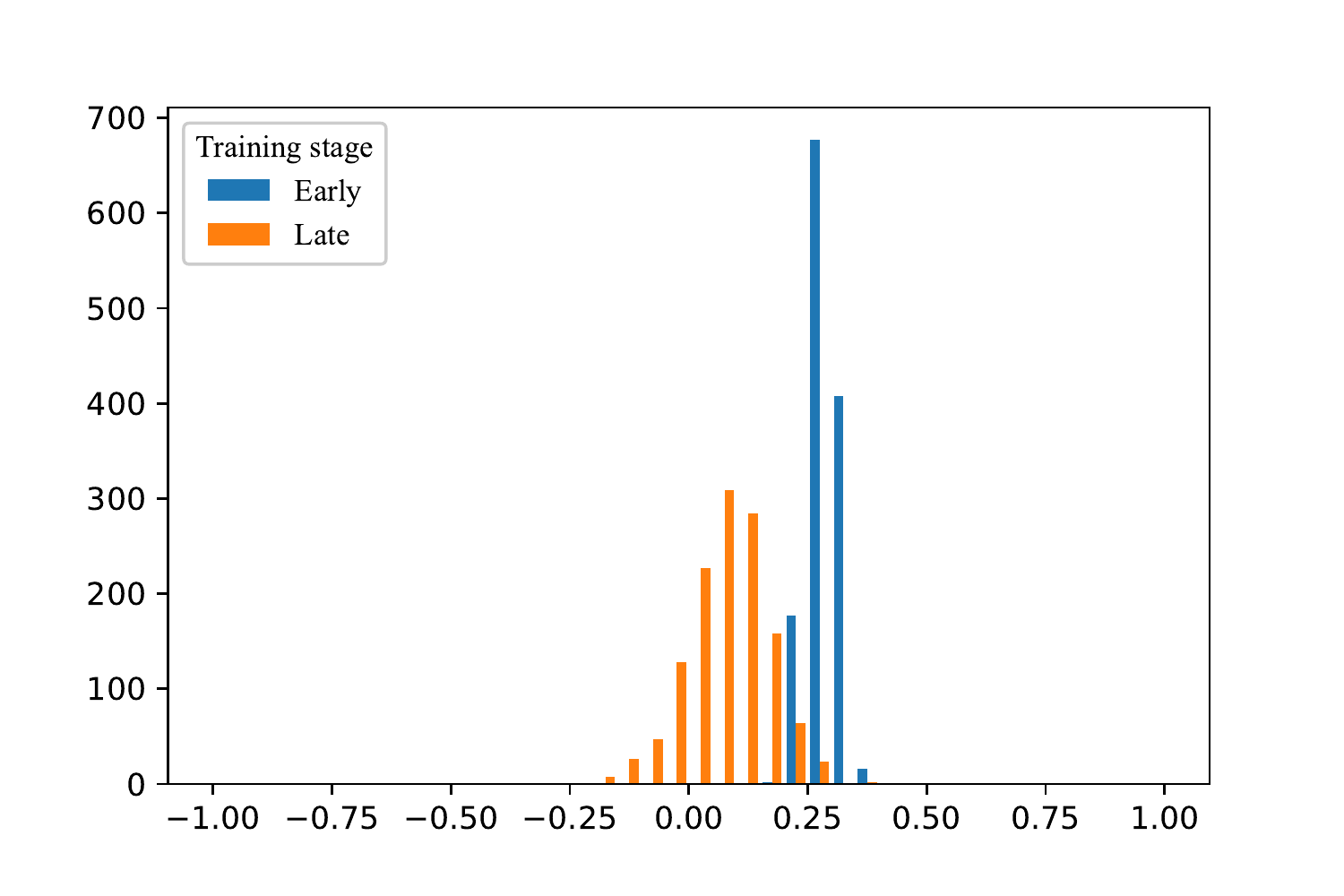}
  \caption{Adam updates vs optimal direction for DCGAN}
\end{subfigure}

\caption{Histograms of the cosines of the angles computed between either SGD or Adam updates of generated samples and the optimal movement direction given by $-\nabla u_0(G_w(z))$. The SGD values corresponds to the statistics in Table \ref{table:ndp1}, while the Adam values correspond to the statistics in Table \ref{table:ndp2}.} 
\label{fig:histograms}
\end{figure*}
In Section \ref{sec:misalignment} we reported on measurements of the misalignment between the directions in which generated samples are updated by SGD steps on the generator and the optimal directions given by $-\nabla u_0(G_w(z))$. Here, we show additional measurements taken during the same experiments, but where the directions of SGD updates are replaced by the directions of Adam updates (Adam was used to train the network). Since Adam is a variation on SGD that takes previous steps into account at every stage, one might expect it to yield larger misalignment. For fixed noise inputs $z$, we computed the difference between generated samples $G_w(z)$ before and after taking one step of an Adam optimizer. We then computed the cosines of the angles between the resulting vectors and the corresponding optimal directions $-\nabla u_0(G_w(z))$. The Adam optimizer used parameters $\beta_1 = 0.5$ and $\beta_2 = 0.999$. The statistics of the misalignment values obtained are shown in Table \ref{table:ndp2}. Comparing to the results in Table \ref{table:ndp1}, we see significantly more misalignment at an early stage in training, but similar values in mid and late stages. 

Figure \ref{fig:histograms} displays histograms of the misalignment cosine values, the statistics of which appear in Table \ref{table:ndp1} and Table \ref{table:ndp2}. To avoid cluttering, we only show the results at early and late stages in training. The histograms clearly show that the movement of generated samples is consistently misaligned with the optimal direction, there being no cosine values above $0.75$. The misalignment is less pronounced at early stages of training, especially when using SGD. Adam and SGD give similar misalignment cosine values at late stages of training, with a small but significant portion of these values being negative, which indicates detrimental movement of generated samples in a direction along which the value of the critic increases.


\subsection{Generated images for MNIST, Fashion MNIST, and CIFAR-10}
\label{sec:generative_appendix}
Figures \ref{fig:generated_mnist}, \ref{fig:generated_fashion}, and \ref{fig:generated_cifar10} include examples of images generated using the trained models from the experiments in Section \ref{sec:ttcvswgangp}. Recall that TTC 1 uses a corresponding untrained generator to create the source distribution $\mu_0$, while TTC 2 uses the standard normal distribution on $\RR^d$ as $\mu_0$. Despite having lower FIDs, the images produced by TTC 1 and TTC 2 do not obviously appear to have improved quality. This is perhaps not surprising, since FID measures the statistics of high-dimensional distributions and thus the modest reductions in FID we have obtained may not be visible to the naked eye for a single minibatch.
\begin{figure*}[b]
    \centering
    \includegraphics[width = 0.65 \textwidth, viewport=50 120 420 215, clip = True]{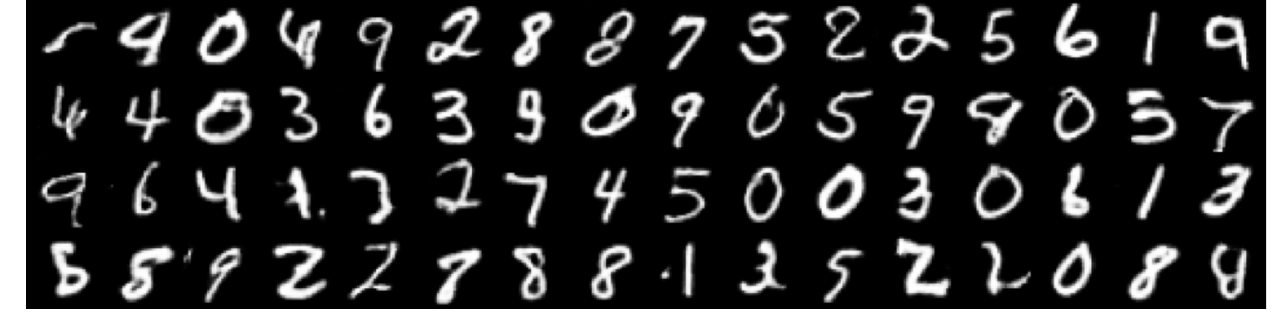}
    \includegraphics[width = 0.65 \textwidth, viewport=50 120 420 215, clip = True]{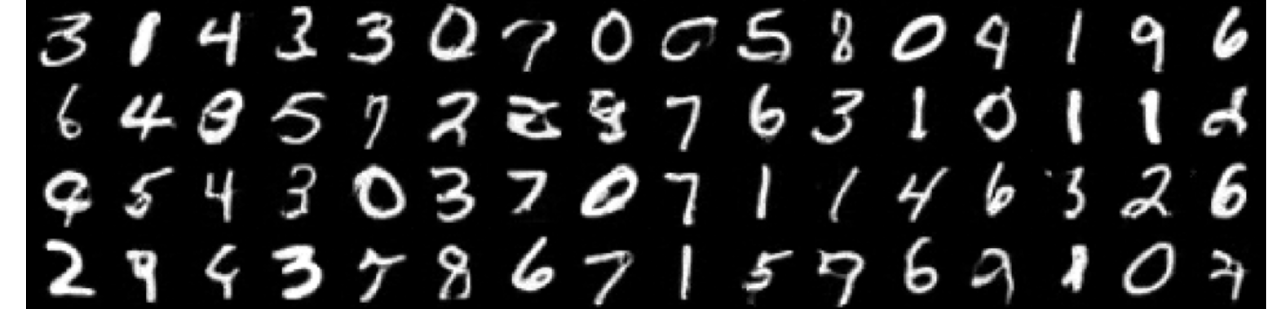}
    \includegraphics[width = 0.65 \textwidth, viewport=50 125 420 215, clip = True]{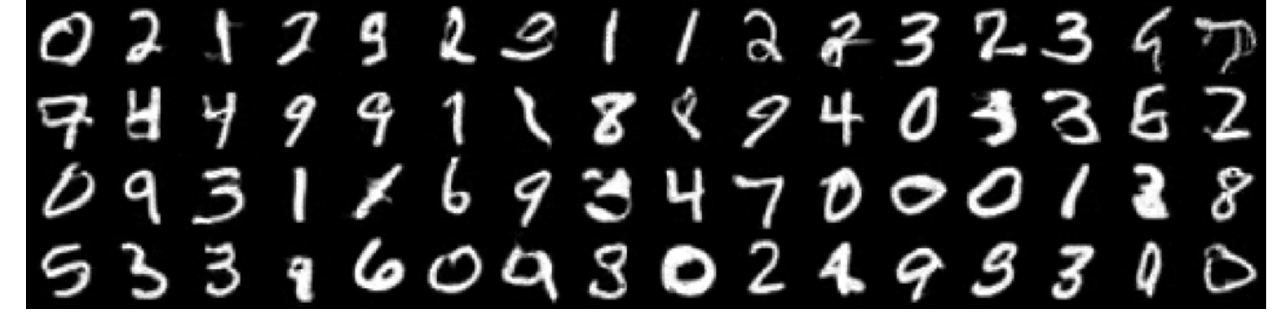}
    \caption{Generated examples for MNIST for a particular training run of each technique. From top to bottom: WGAN-GP (FID 20.8), TTC 1 (FID 18.5), and TTC 2 (FID 18.2).}
    \label{fig:generated_mnist}
\end{figure*}
\begin{figure*}
    \centering
    \includegraphics[width = 0.65 \textwidth, viewport=50 120 420 215, clip = True]{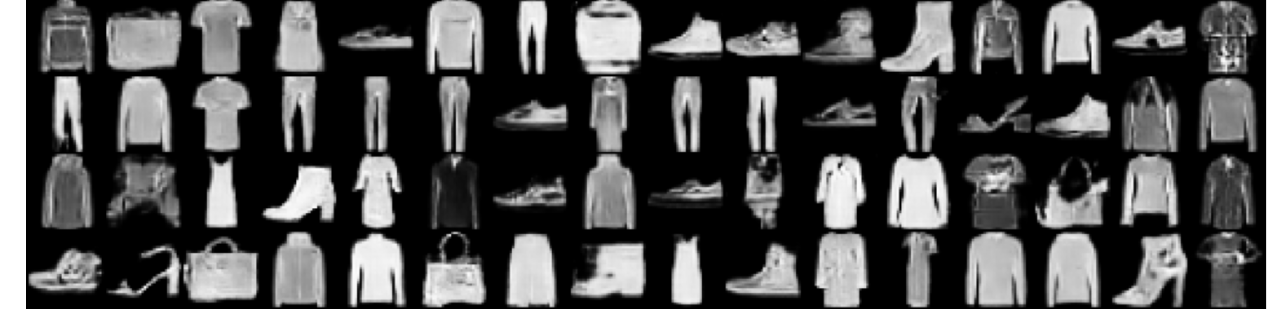}
    \includegraphics[width = 0.65 \textwidth, viewport=50 120 420 215, clip = True]{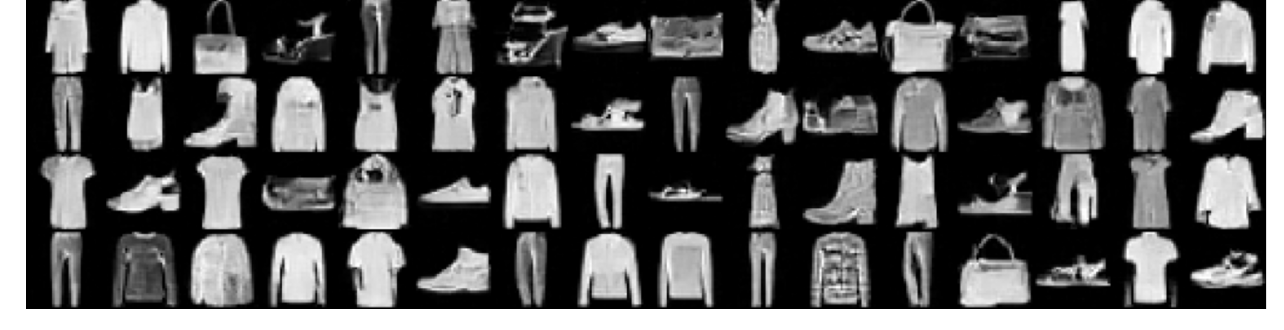}
    \includegraphics[width = 0.65 \textwidth, viewport=50 125 420 215, clip = True]{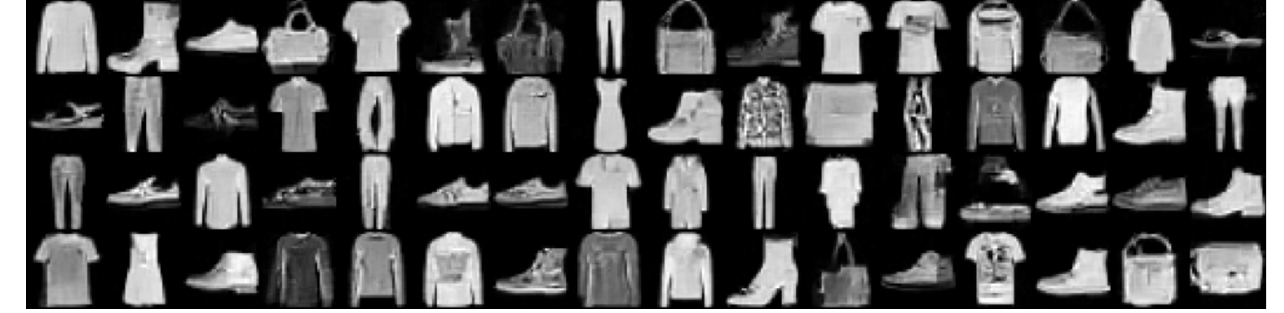}
    \caption{Generated examples for F-MNIST for a particular training run of each technique. From top to bottom: WGAN-GP (FID 25.7), TTC 1 (22.4), and TTC 2 (22.2).}
    \label{fig:generated_fashion}
\end{figure*}
\begin{figure*}
    \centering
    \includegraphics[width = 0.65 \textwidth, viewport=50 120 420 215, clip = True]{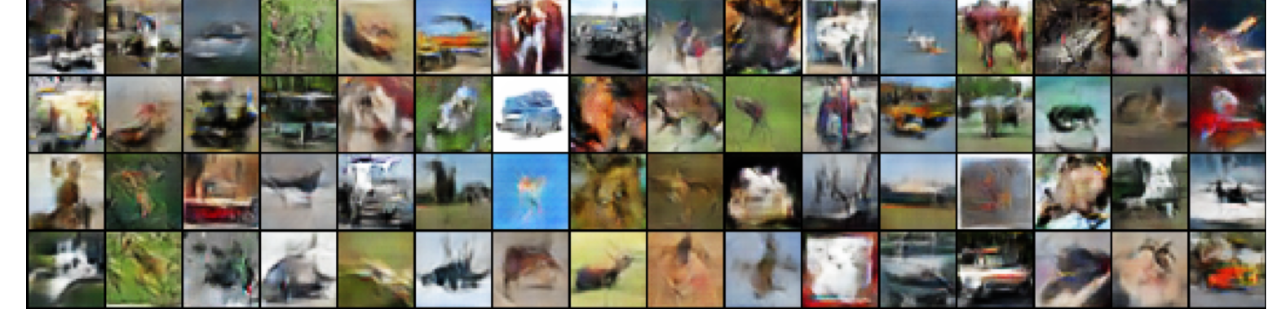}
    \includegraphics[width = 0.65 \textwidth, viewport=50 120 420 215, clip = True]{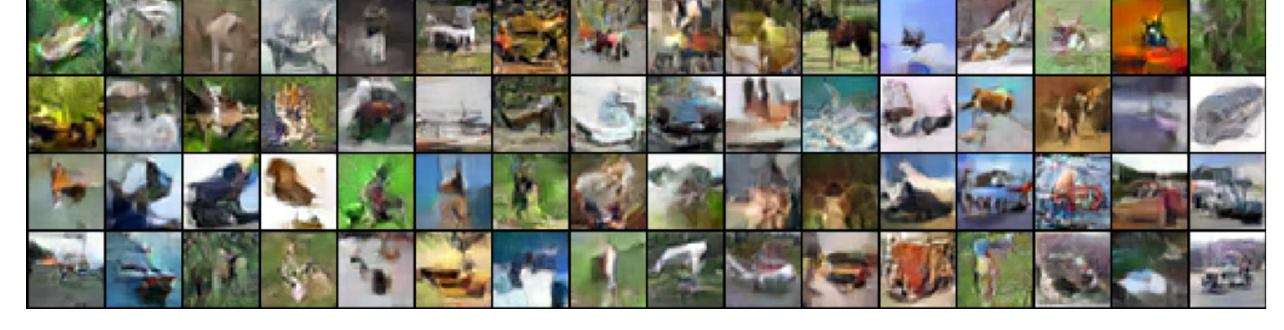}
    \includegraphics[width = 0.65 \textwidth, viewport=50 125 420 215, clip = True]{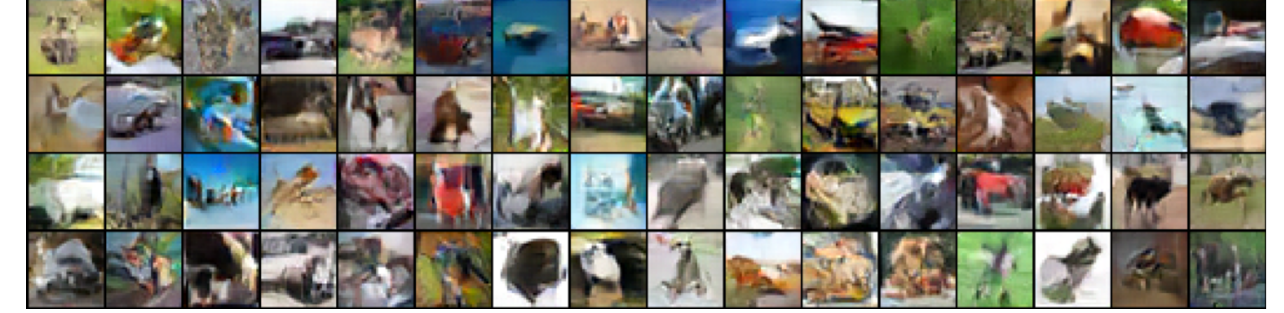}
    \caption{Generated examples for CIFAR10 for a particular training run of each technique. From top to bottom: WGAN-GP (FID 28.9), TTC 1 (FID 27.8), and TTC 2 (FID 26.5).}
    \label{fig:generated_cifar10}
\end{figure*}

\subsection{Additional image translation examples}
\label{app:translation}
Figure \ref{fig:more_monets} contains additional image translation examples, generated using the same sequence of critics and step sizes that were used to generate Figure \ref{fig:translationexample}.
\begin{figure*}
    \centering
    \includegraphics[width = 0.58 \textwidth, viewport=50 75 420 260, clip = True]{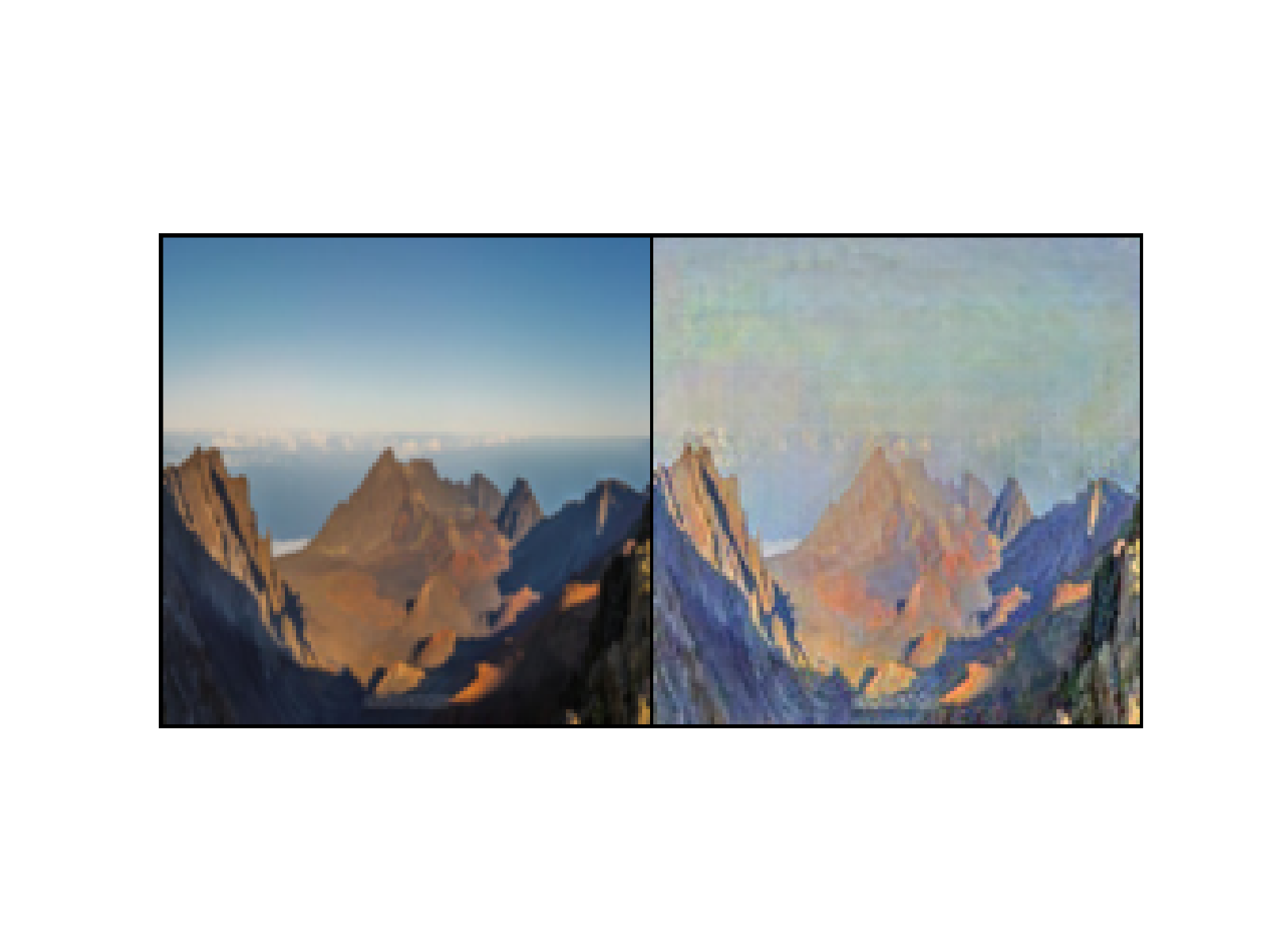}
    \vspace{0.00 in}
    \includegraphics[width = 0.58 \textwidth, viewport=50 75 420 260, clip = True]{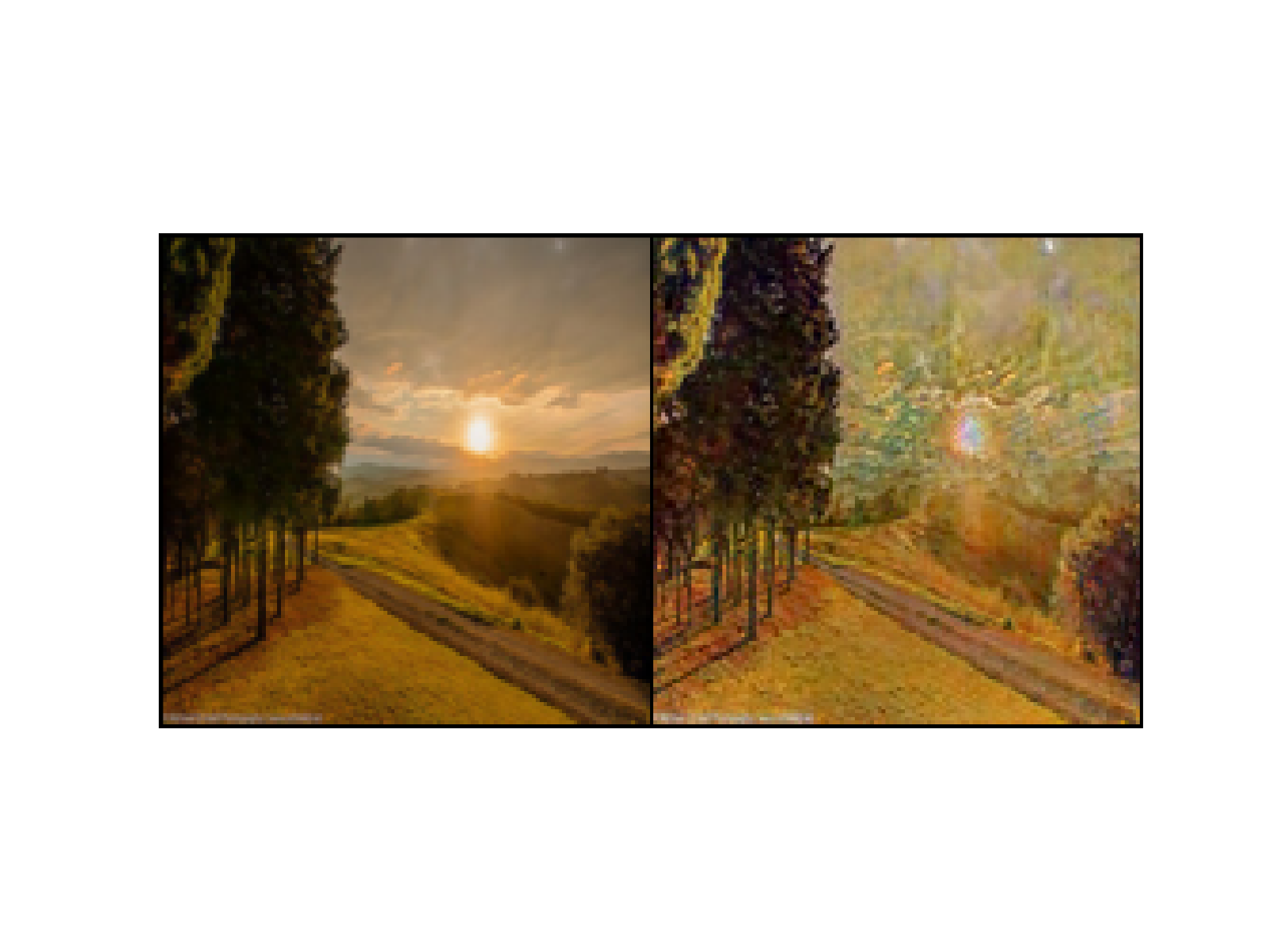}
    \includegraphics[width = 0.58 \textwidth, viewport=50 75 420 260, clip = True]{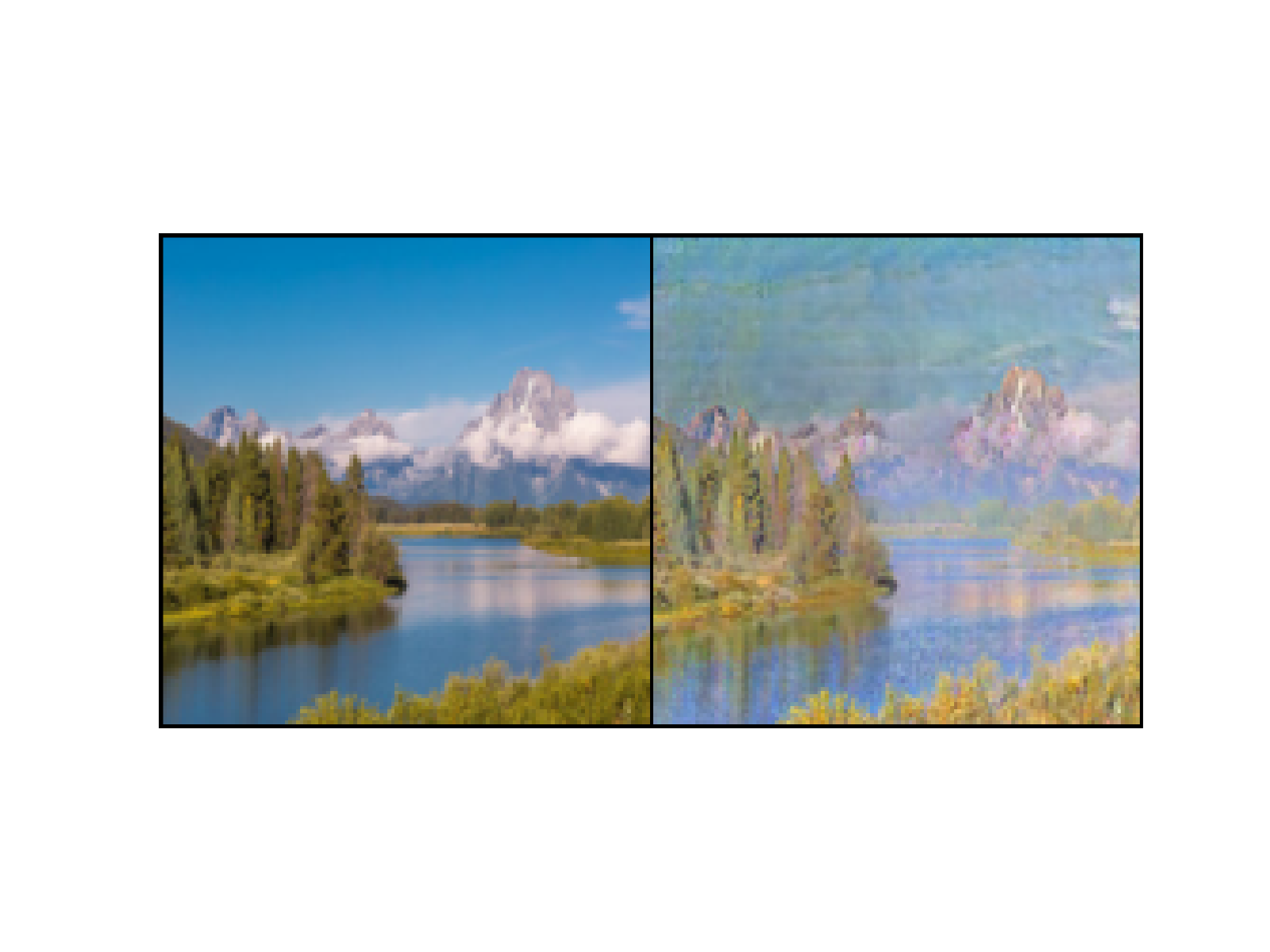}
    \includegraphics[width = 0.58 \textwidth, viewport=50 75 420 260, clip = True]{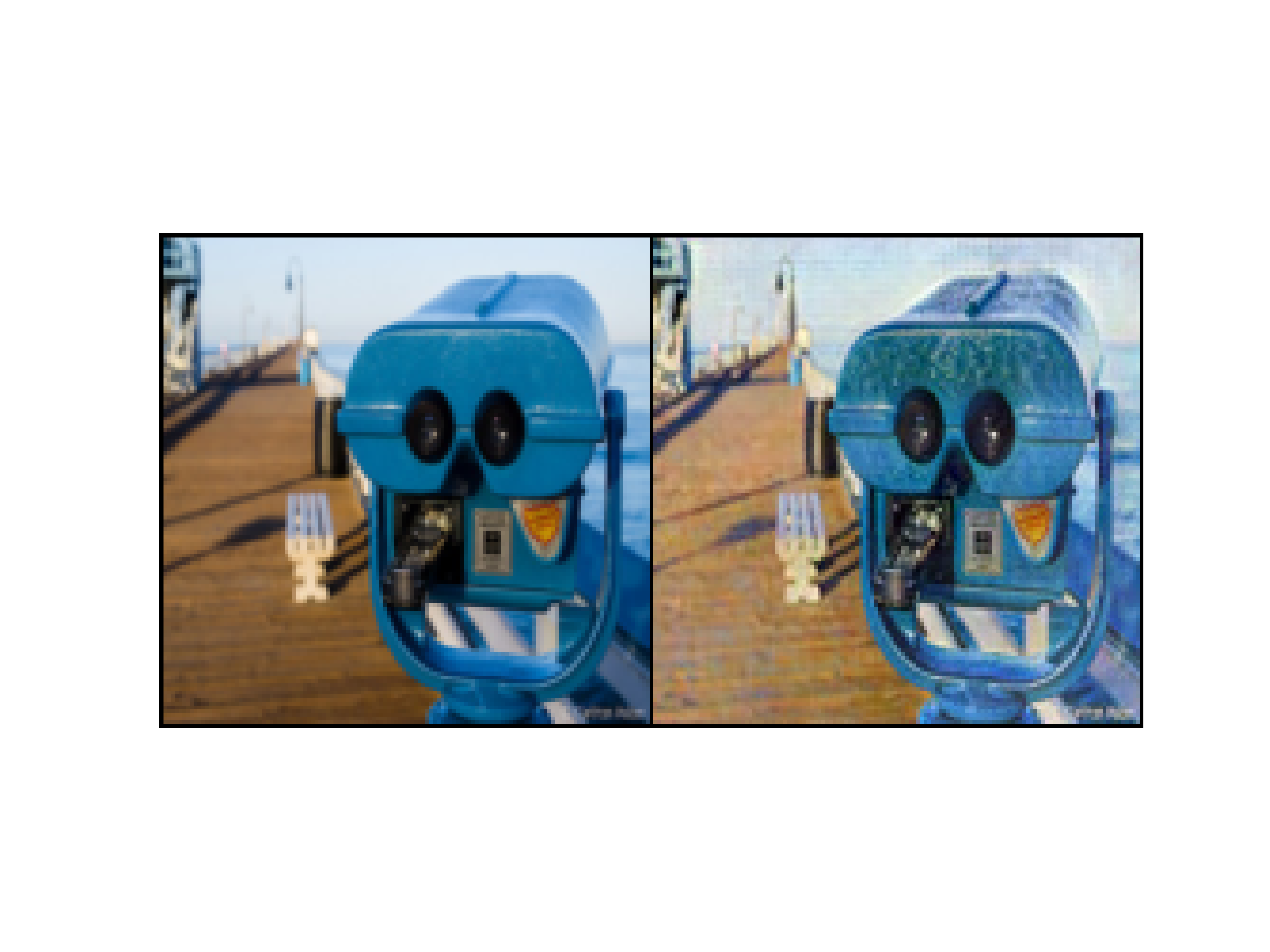}
    \caption{Additional examples of translating landscape photos into Monet paintings using TTC. As the fourth example shows, the quality of the translation worsens for source images which are far from Monet painting samples; since our step size is determined by the average transport distance, it is possible that it is not suitable for outliers with large initial transport distances.}
    \label{fig:more_monets}
\end{figure*}

\subsection{Additional image denoising examples}
\label{app:denoising}
Figure \ref{fig:more_denoising} includes additional noisy images from BSDS500, restored using the same sequence of critics and step sizes that were used to produce Figure \ref{fig:deer}. The corresponding PSNR values for each restored image are given in \Cref{table:moredenoisingresults}. Based on these PSNR values, the advantage of TTC over \cite{lunz2018adversarial} seems to increase when large parts of the image are roughly constant. Indeed, TTC has the largest improvement for the image of the cardinal, which has large constant portions. On the other hand, the improvement is smaller for the image of the grassy underbrush, which has many fine details. As a possible explanation for this trend, we note that on this dataset we have observed that the magnitude of the noise, summed over the entire image, is bigger than the estimated Wasserstein-1 distance, indicating that under the optimal transport some noisy images are not mapped to their original clean versions. For such images, one would expect that the improvement in PSNR after restoration would be lower.

\begin{figure*}
    \centering
    \includegraphics[width = 1.0 \textwidth, viewport=50 125 420 225, clip = True]{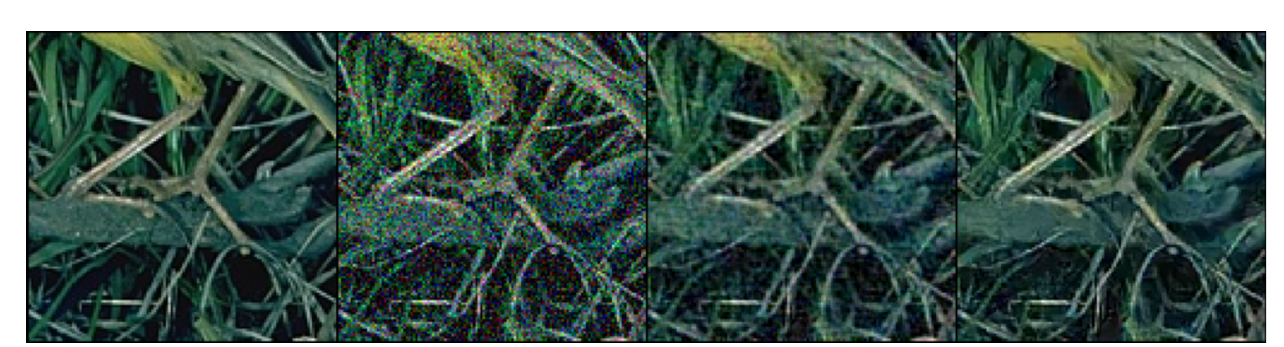}
    \includegraphics[width = 1.0 \textwidth, viewport=50 125 420 225, clip = True]{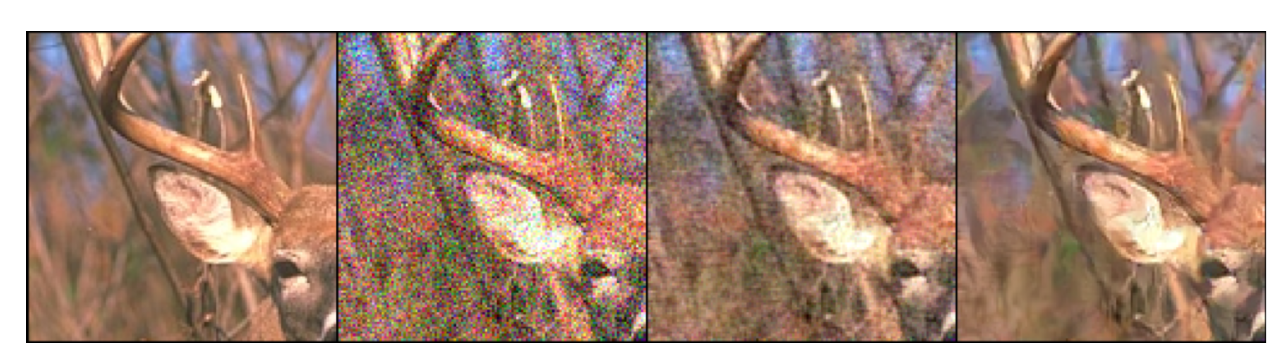}
    \includegraphics[width = 1.0 \textwidth, viewport=50 125 420 225, clip = True]{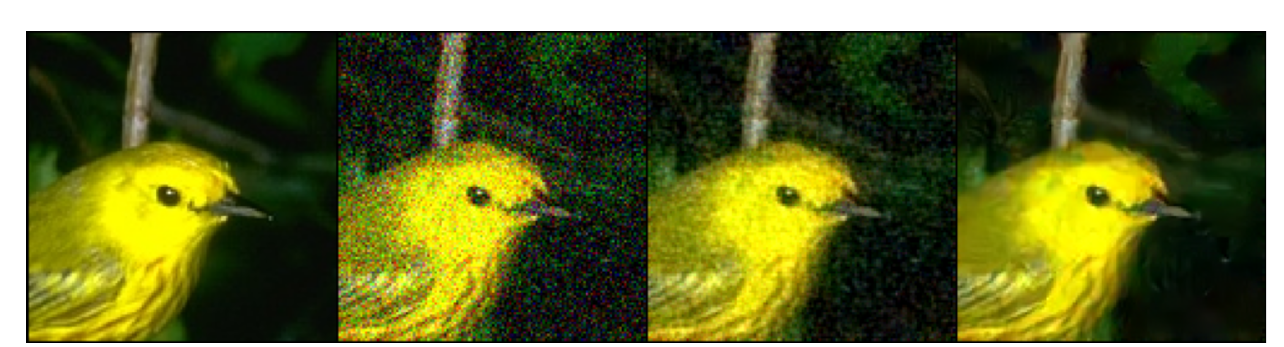}
    \includegraphics[width = 1.0 \textwidth, viewport=50 125 420 225, clip = True]{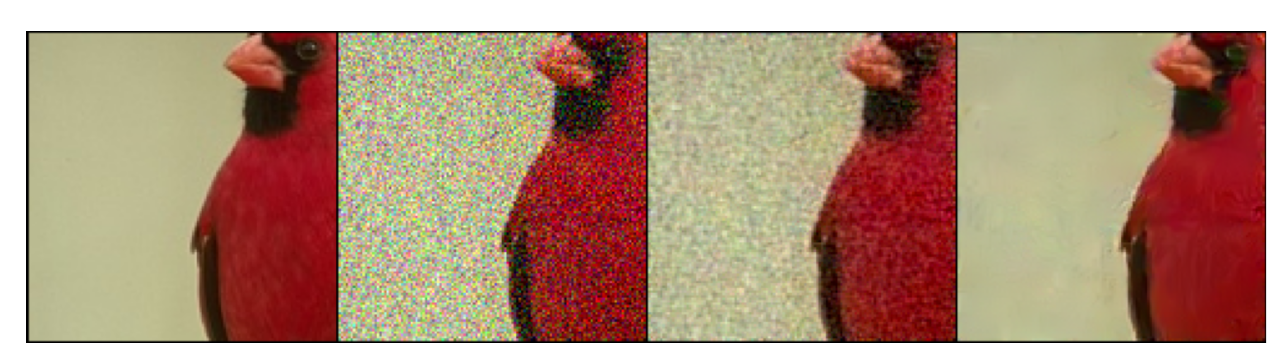}
    \caption{Restored images from BSDS500 with noise level $\sigma = 0.15$ using the same sequence of trained critics as described in Section \ref{sec:denoising}. From left to right: original image, noisy image, restored image using \cite{lunz2018adversarial}, restored image using TTC. For PSNR values, see Table \ref{table:moredenoisingresults}. The image of the deer is the same as the one in \Cref{sec:denoising}}.
    \label{fig:more_denoising}
\end{figure*}

\begin{table}[h]
\centering
\begin{tabular}{cccc}
\toprule
\multicolumn{1}{l}{} &
\multicolumn{3}{c}{PSNR (dB)} \\ 
\cmidrule(lr){2-4}
 Example & Noisy Image & Adv. Reg. \cite{lunz2018adversarial} & TTC  \\
\midrule
Underbrush & $16.4$ & $23.6$ & $\mathbf{24.8}$ \\
\midrule
Deer & $16.4$ &  $24.5$ & $ \mathbf{27.0}$ \\
\midrule
Yellow Bird & $16.5$ &  $24.8$ & $ \mathbf{29.2}$ \\
\midrule
Cardinal & $16.5$ & $25.3$ & $\mathbf{33.0}$ \\
\bottomrule
\end{tabular}
\caption{PSNR values for the images in \Cref{fig:more_denoising}. As mentioned above, the advantage of TTC increases when large parts of the initial image are almost constant.}
\label{table:moredenoisingresults}
\end{table}

\end{appendices}

\end{document}
https://www.overleaf.com/project/6189648a36768087755f6789